\newtheorem{theorem}{Theorem}[section]
\newtheorem{lemma}[theorem]{Lemma}
\def\RR{\mathbb{R}}
\def\calR{\mathcal{R}}
\def\calM{\mathcal{M}}
\def\calS{\mathcal{S}}
\DeclareMathOperator*{\argmin}{arg\,min}
\begin{document}
\title{Non-Convex Projected Gradient Descent for Generalized Low-Rank Tensor Regression}

\author{Han Chen$^\ast$, Garvesh Raskutti$^\dag$ and Ming Yuan$^\ast$\\
University of Wisconsin-Madison}

\date{}

\footnotetext[1]{
Morgridge Institute for Research and Department of Statistics, University of Wisconsin-Madison, 1300 University Avenue, Madison, WI 53706. The research of Han Chen and Ming Yuan was supported in part by NSF FRG Grant DMS-1265202, and NIH Grant 1-U54AI117924-01.}
\footnotetext[2]{
Departments of Statistics and Computer Science, and Optimization Group at Wisconsin Institute for Discovery, University of Wisconsin-Madison, 1300 University Avenue, Madison, WI 53706. The research of Garvesh Raskutti is supported in part by NSF Grant DMS-1407028.}
\maketitle

\newpage
\begin{abstract}
In this paper, we consider the problem of learning high-dimensional tensor regression problems with low-rank structure. One of the core challenges associated with learning high-dimensional models is computation since the underlying optimization problems are often non-convex. While convex relaxations could lead to polynomial-time algorithms they are often slow in practice. On the other hand, limited theoretical guarantees exist for non-convex methods. In this paper we provide a general framework that provides theoretical guarantees for learning high-dimensional tensor regression models under different low-rank structural assumptions using the projected gradient descent algorithm applied to a potentially non-convex constraint set $\Theta$ in terms of its \emph{localized Gaussian width}.  We juxtapose our theoretical results for non-convex projected gradient descent algorithms with previous results on regularized convex approaches. The two main differences between the convex and non-convex approach are: (i) from a computational perspective whether the non-convex projection operator is computable and whether the projection has desirable contraction properties and (ii) from a statistical upper bound perspective, the non-convex approach has a superior rate for a number of examples. We provide three concrete examples of low-dimensional structure which address these issues and explain the pros and cons for the non-convex and convex approaches. We supplement our theoretical results with simulations which show that, under several common settings of generalized low rank tensor regression, the projected gradient descent approach is superior both in terms of statistical error and run-time provided the step-sizes of the projected descent algorithm are suitably chosen.
\end{abstract}

\newpage
\section{Introduction}

Parameter estimation in high-dimensional regression has received substantial interest over the past couple of decades. See, e.g., \cite{BuhlmannVDGBook, HastieTibshiraniWainwrightBook}. One of the more recent advances in this field is the study of problems where the parameters and/or data take the form of a multi-way array or \emph{tensor}. Such problems arise in many practical settings \citep[see, e.g.,][]{CohenCollins12,LiLi10,Semerci14,SidNion10} and present a number of additional challenges that do not arise in the vector or matrix setting. In particular, one of the challenges associated with high-dimensional tensor regression models is how to define low-dimensional structure since the notion of rank is ambiguous for tensors \citep[see, e.g.,][]{KoldarBader}. Different approaches on how to impose low-rank and sparsity structure that lead to implementable algorithms have been considered. See, e.g., \cite{GandRecht11, MuGoldfarb14, RaskuttiYuan15, Tomioka13, YuanZhang14}, and references therein. All of the previously mentioned approaches have relied on penalized convex relaxation schemes and in particular, many of these different approaches have been encompassed by \cite{RaskuttiYuan15}. The current work complements these earlier developments by studying the non-convex projected gradient descent (PGD) approaches to generalized low-rank tensor regression.

While convex approaches are popular since greater theoretical guarantees have been provided for them, non-convex approaches have gained popularity as recently more theoretical guarantees have been provided for specific high-dimensional settings. See, e.g., \cite{Fan01, JainEtAl14, JainEtAl16, LohWainwright15}. Furthermore, even though non-convex problems do not in general lead to polynomial-time computable methods, they often work well in practice. In particular, inspired by the recent work of \cite{JainEtAl14, JainEtAl16} who demonstrated the effectiveness of non-convex projected gradient descent approaches for high-dimensional linear regression and matrix regression, we consider applying similar techniques to high-dimensional low-rank tensor regression problems with a generalized linear model loss function. 

Low-rankness in higher order tensors may occur in a variety of ways. To accommodate them, we develop a general framework which provides theoretical guarantees for projected gradient descent algorithms applied to tensors residing in general low-dimensional subspaces. Our framework relies on two properties ubiquitous in low-rank tensor regression problems: that the parameter space is a member of a class of subspaces super-additive when indexed over a partially ordered set; and that there exists a(n) (approximate) projection onto each subspace satisfying a certain contractive property. Assuming that the coefficient tensor lies in a low-dimensional subspace $\Theta$ satisfying these properties, we establish general risk bounds for non-convex projected gradient descent based methods applied to a generalized tensor regression model. Our main theoretical result shows that the Frobenius norm scales as $n^{-1/2}{w_G[\Theta \cap \mathbb{B}_{\rm F}(1)]}$, where $n$ is the sample size, $\mathbb{B}_{\rm F}(1)$ refers to the Frobenius-norm ball with radius $1$ and $w_G[\Theta \cap \mathbb{B}_{\rm F}(1)]$ refers to the \emph{localized Gaussian width} of $\Theta$. While statistical rates in terms of Gaussian widths are already established for convex regularization approaches \citep[see, e.g.,][]{ChandraRecht,RaskuttiYuan15}, this is to the best of our knowledge the first general upper bound for non-convex projected gradient descent in terms of a localized Gaussian width. 

A second  major contribution we make is to provide a comparison both in terms of statistical error rate and computation to existing convex approaches to low rank tensor regression. Using our statistical upper bound for non-convex projected gradient descent which is stated in terms of the localized Gaussian width of $\Theta$, we show explicitly that our upper bound for the non-convex approach is no larger (up to a constant) than the those for convex regularization schemes \citep[see, e.g., Theorem 1 of][]{RaskuttiYuan15}. To make this comparison more concrete, we focus on three particular examples of low-rank tensor structure: (i) low sum of ranks of each slice of a tensor; (ii) sparsity and low-rank structure for slices; and (iii) low Tucker rank. In case (i), both approaches are applicable and achieve the same rate of convergence. For case (ii), the non-convex approach is still applicable whereas a convex regularization approach is not naturally applicable. In case (iii) again both approaches are applicable but a superior statistical performance can be achieved via the non-convex method. We supplement our theoretical comparison with a simulation comparison. Our simulation results show that our non-convex projected gradient descent based approach compares favorably to the convex regularization approach using a generic \verb+cvx+ solver in terms of both run-time and statistical performance provided optimal step-size choices in the projected gradient descent and regularization parameters in the convex regularization approach are used. Furthermore the projected gradient descent scales to much larger-scale data than generic convex solvers.

To summarize, we make two major contributions in this work. Firstly, we provide a general upper bound for generalized tensor regression problems in terms of the localized Gaussian width of the constraint set $\Theta$. This upper bound is novel and this result can be applied for projected gradient descent with any constraint set $\Theta$ satisfying the two standard properties described above. Using this general result, our second major contribution is to provide a comparison to standard  convex regularization schemes. We show that unlike for vector and matrix problems where convex regularization schemes provably achieve the same statistical upper bounds as non-convex approaches, the more complex structure of tensors means that our non-convex approach could yield a superior statistical upper bound in some examples compared to previously considered convex regularization schemes. We also demonstrate through simulations the benefits of the non-convex approach compared to existing convex regularization schemes for various low-rank tensor regression problems.

The remainder of the paper is organized as follows: Section~\ref{SecBackground} introduces the basics of the low-rank tensor regression models we consider and introduces the projected gradient descent algorithm. Section~\ref{SecMain} presents the general theoretical results for non-convex projected gradient descent and specific examples are discussed in Section~\ref{SecExamples}. A simulation comparison between the convex and non-convex approach is provided in Section~\ref{SecSimulations} and proofs are provided in Section~\ref{SecProofs}.

\section{Methodology}

\label{SecBackground}

Consider a generalized tensor regression framework where the conditional distribution of a scalar response $Y$ given a covariate tensor $X\in \mathbb{R}^{d_1 \times d_2 \times \ldots \times d_N}$ is given by\begin{equation}
\label{EqnGLM}
p(Y| X, T) = h(Y)\exp\left\{Y\langle X, T \rangle - a(\langle X, T \rangle)\right\},
\end{equation}
%
%
%
where $a(\cdot)$ is a strictly convex log-partition function, $h(\cdot)$ is a nuisance parameter, and $T \in \mathbb{R}^{d_1 \times d_2 \times \ldots \times d_N}$ is the parameter tensor of interest. Typical examples of $a(\cdot)$ include $a(\theta) = \frac{1}{2}\theta^2$ leading to the usual normal linear regression, $a(\theta) = \log(1+ e^{\theta})$ corresponding to logistic regression, and $a(\theta) = e^{\theta}$ which can be identified with Poisson regression. The goal is to estimate $T$ based on the training data $\{(X^{(i)}, Y^{(i)}): 1\le i\le n\}$. For convenience we assume $(X^{(i)}, Y^{(i)})$'s are independent copies of $(X,Y)$. Hence the negative log-likelihood risk objective is:
\begin{equation}
\label{Eq:GLMLogLike}
\mathcal L (A) = \frac{1}{n}\sum_{i=1}^n \left[a(\langle X^{(i)}, A \rangle) - Y^{(i)} \langle X^{(i)}, A \rangle - \log h(Y^{(i)}) \right].
\end{equation}
 The notation $\langle\cdot,\cdot \rangle$ will refer throughout this paper to the standard inner product taken over appropriate Euclidean spaces. Hence, for $A\in \RR^{d_1\times\cdots\times d_N}$ and $B\in \RR^{d_1\times\cdots\times d_N}$:
$$\langle A , B \rangle=\sum_{j_1 = 1}^{d_1}\cdots\sum_{j_N = 1}^{d_N}A_{j_1,\ldots,j_N}B_{j_1,\ldots,j_N}\in \RR.$$

Using the standard notion of inner product, for a tensor $A$, $\|A\|_{\rm F} =\langle A, A\rangle^{1/2}$. And the empirical norm $\|\cdot\|_n$ for a tensor $A\in \RR^{d_1\times \cdots \times d_N}$ is define as:
$$
\|A\|_n^2 := {1\over n}\sum_{i=1}^n \langle A, X^{(i)}\rangle^2.
$$
Also, for any linear subspace $\mathcal{A} \subset \mathbb{R}^{d_1 \times d_2 \times...\times d_N}$, $A_{\mathcal{A}}$ denotes the projection of a tensor $A$ onto $\mathcal{A}$. More precisely 
\begin{equation*}
A_{\mathcal{A}} := \argmin_{M \in \mathcal{A}} \|A - M\|_{\rm F}.
\end{equation*}

\subsection{Background on tensor algebra}

One of the major challenges associated with low-rank tensors is that the notion of higher-order tensor decomposition and rank is ambiguous. See, e.g., \cite{KoldarBader} for a review. There are two standard decompositions we consider, the so-called canonical polyadic (CP) decomposition and the Tucker decomposition. The CP decomposition of a higher-order tensor is defined as the smallest number $r$ of rank-one tensors needed to represent a tensor $A\in \RR^{d_1\times d_2\times d_3 \times\ldots \times d_N}$:
\begin{equation}
\label{eq:cpdecomp}
A=\sum_{k=1}^r u_{k,1} \otimes u_{k,2} \otimes u_{k,3} \otimes \ldots \otimes u_{k,N}
\end{equation}
where $u_{k, m} \in \RR^{d_m}$, for $1 \leq k \leq r$ and $1 \leq m \leq N$.

A second popular decomposition is the so-called Tucker decomposition. The Tucker decomposition of a tensor $A\in \RR^{d_1\times d_2\times d_3 \times \ldots \times d_N}$ is of the form:
$$
A_{j_1j_2j_3\ldots j_N}=\sum_{k_1=1}^{r_1}\sum_{k_2=1}^{r_2}\sum_{k_3=1}^{r_3}\ldots\sum_{k_N=1}^{r_N} S_{k_1k_2k_3\ldots k_N}U_{j_1k_1,1}U_{j_2k_2,2}U_{j_3k_3,3}\ldots U_{j_Nk_N,N}
$$
so that $U_m \in \mathbb{R}^{d_m \times r_m}$ for $1 \leq m \leq N$ are orthogonal matrices and $S \in \mathbb{R}^{r_1 \times r_2 \times\ldots\times r_N}$ is the so-called core tensor where any two sub-tensors of $S$ satisfy basic orthogonality properties \citep[see, e.g.,][]{KoldarBader}. The vector $(r_1,r_2,r_3,\ldots,r_N)$ are referred to as the Tucker ranks of $A$. It is not hard to see that if (\ref{eq:cpdecomp}) holds, then the Tucker ranks $(r_1,r_2,r_3,\ldots,r_N)$ can be equivalently interpreted as the dimensionality of the linear spaces spanned by $\{u_{k,1}: 1\le k\le r\}$, $\{u_{k,2}: 1\le k\le r\}$, \ldots, and $\{u_{k,N}: 1\le k\le r\}$ respectively.

A convenient way to represent low Tucker ranks of a tensor is through \emph{matricization}. Denote by $\calM_1(\cdot)$ the mode-$1$ matricization of a tensor, that is $\calM_1(A)$ is the $d_1\times (d_2d_3\ldots d_N)$ matrix whose column vectors are the mode-$1$ fibers of $A\in \RR^{d_1\times d_2\times \ldots \times d_N}$. $\calM_2(\cdot)$,\ldots $\calM_N(\cdot)$ are defined in the same fashion. By defining
$$
{\rm rank}(\calM_m(A))=r_m(A),
$$
it follows that $(r_1(A), r_2(A),\ldots,r_N(A))$ represent the Tucker ranks of $A$. For later discussion, define $\calM_i^{-1}(\cdot)$ to be the inverse of mode-$i$ matricization, so
$$
\calM_i^{-1} : \mathbb{R}^{d_i \times (d_1\cdot d_2\cdots d_{i-1}\cdot d_{i+1}\cdots d_N)} \rightarrow \mathbb{R}^{d_1 \times d_2 \times...\times d_N},
$$
such that $\calM_i^{-1}(\calM_i(A)) = A$. 

Further, we define \emph{slices} of a tensor as follows. For an order-$3$ tensor $A$,
the $(1,2)$ slices of $A$ are the collection of $d_3$ matrices of $d_1\times d_2$
$$\left\{A_{\cdot \cdot j_3}:=(A_{j_1j_2j_3})_{1\le j_1\le d_1,1\le j_2\le d_2}: 1\le j_3\le d_3\right\}.$$

\subsection{Low-dimensional structural assumptions}

Since the notion of low-rank structure is ambiguous for tensors, we focus on three specific examples of low-rank structure. To fix ideas, we shall focus on the case when $N = 3$. Generalization to higher order cases is straightforward and omited for brevity. Firstly we place low-rank structure on the matrix slices. In particular first define:
$$
\Theta_1(r) = \left\{  A\in \RR^{d_1\times d_2 \times d_3}  :  \sum_{j_3=1}^{d_3} \text{rank}(A_{\cdot \cdot j_3} )\leq r \right\},
$$
which is the sum of the rank of the matrix slices.

Secondly we can impose a related notion where we take maximums of the rank of each slice and sparsity along the slices.
$$
\Theta_2(r, s ) = \left\{  A\in \RR^{d_1\times d_2 \times d_3}  :  \max_{j_3} \text{rank}(A_{\cdot \cdot j_3} )\leq r, \sum_{j_3=1}^{d_3} \mathbbm{1}(A_{\cdot \cdot  j_3} \neq 0)\leq s \right\}.
$$

Finally, we impose the assumption that all of the Tucker ranks are upper bounded:
$$
\Theta_3(r) = \left\{A\in \RR^{d_1\times d_2 \times d_3}  :  \max\{r_1(A), r_2(A), r_3(A) \}\leq r  \right\}.
$$
Note that all these low-dimensional structural assumption $\Theta_1(r)$, $\Theta_2(r, s )$ and $\Theta_3(r)$ are non-convex sets. In the next subsection we introduce a general projected gradient descent (PGD) algorithm for minimizing the generalized linear model objective \eqref{Eq:GLMLogLike} subject to the parameter tensor $A$ belonging to a potentially non-convex constraint set $\Theta$. 

\subsection{Projected Gradient Descent (PGD) iteration}

In this section we introduce the non-convex projected gradient descent (PGD) approaches developed in \cite{JainEtAl14,JainEtAl16}. The problem we are interested in is minimizing the generalized linear model objective \eqref{Eq:GLMLogLike} subject to $A$ belonging to a potentially non-convex set. The PGD algorithm for minimizing a general loss function $f(A)$ subject to the constraint $A \in \Theta$ is as follows:

\begin{algorithm}[H]
\caption{Projected Gradient Descent}
\label{CHalgorithm}
\begin{algorithmic}[1]
\State \textbf{Input : } data $\textbf{Y}, \textbf{X}$, parameter space $\Theta$, iterations $K$, step size $\eta$
\State \textbf{Initialize : } $k=0$, $\widehat {T}_{0} \in \Theta$
\For{ $k=1,2,\ldots, K$ }
\State $g_k = \widehat T_k - \eta \nabla f(\widehat T_k)$ \text{ (gradient step)}
\State $\widehat T_{k+1} = P_{\Theta}(g_k)$  or $\widehat T_{k+1} = \widehat P_{\Theta}(g_k)$ \text{ ((approximate) projection step)}
\EndFor
\State \textbf{Output : } $\widehat T_K$
\end{algorithmic}
\end{algorithm}

The notation $\widehat P_{\Theta}(\cdot)$ refers to an approximate projection on to $\Theta$ if an exact projection is not implementable. The PGD algorithm has been widely used for both convex and non-convex objectives and constraint sets. In our setting, we choose the negative log-likelihood for the generalized linear model as the function $f(A)$ to minimize while focusing on $\Theta = \Theta_1(r), \Theta_2(r,s)$ and $\Theta_3(r)$.
%

The projections we consider are all combinations of projections on to matrix or vector subspaces defined in \cite{JainEtAl14,JainEtAl16}. In particular, for a vector $v \in \mathbb{R}^d$, we define the projection operator $\tilde{P}_s(v)$ as the projection on to the set of $s$-sparse vectors by selecting the $s$ largest elements of $v$ in $\ell_2$-norm. That is:
\begin{equation*}
\tilde{P}_s(v) := \argmin_{\|z\|_{\ell_0} \leq s} \|z - v\|_{\ell_2}.
\end{equation*}
For a matrix $M \in \mathbb{R}^{d_1 \times d_2}$, let $\bar{P}_r(M)$ denote the rank-$r$ projection:
\begin{equation*}
\bar{P}_r(M) := \argmin_{{\rm rank}(Z) \leq r} \|Z - M\|_{\rm F}.
\end{equation*}
As mentioned in ~\cite{JainEtAl14,JainEtAl16}, this projection is also computable by taking the top $r$ singular vectors of $M$. For the remainder of this paper we use both of these projection operators for vectors and matrices respectively.

\section{Main Results}

\label{SecMain}

In this section we present our general theoretical results where we provide a statistical guarantee for the PGD algorithm applied to a low-dimensional space $\Theta$. 

\subsection{Properties for $\Theta$ and its projection}
To ensure the PGD algorithm converges for a given subspace $\Theta$, we view it as a member of a collection of subspaces $\{\Theta(t): t \in \Xi\}$ for some $\Xi\subset {\mathbb Z}_+^k$ and require some general properties of the collection. The index $t$ typically represents a sparsity and/or low-rank index and may be multi-dimensional. For example, $\Theta_1(r)$ is indexed by rank $r$ where
$$\Xi=\{0,\ldots,d_3\cdot\min\{d_1,d_2\}\}.$$
Similarly, $\Theta_{2}(r,s)$ is indexed by $t=(r,s)$ so that
$$\Xi=\{(0,0),\ldots, (\min\{d_1,d_2\},d_3)\},$$
and $\Theta_3(r)$ is indexed by rank $r$ so that
$$\Xi=\left\{0,\ldots, \max\left\{\min\{d_1,d_2d_3\},\min\{d_2,d_1d_3\},\min\{d_3,d_1d_2\}\right\}\right\}.$$

Note that the $\Xi$ is partially ordered where $a\ge (\le, <, >) b$ for two vectors $a$ and $b$ of conformable dimension means the inequality holds in an element-wise fashion.
\paragraph{Definition 1.} A set $\{\Theta(t): t \in \Xi\}$ is a {\it superadditive and partially ordered collection of symmetric cones} if
\begin{enumerate}
\item[(1)] each member $\Theta(t)$ is a {\it symmetric cone} in that if $z\in \Theta(t)$, then $c z \in \Theta(t)$ for any $c\in \RR$;
\item[(2)] the set is {\it partially ordered} in that for any $t_1 \leq t_2$, $\Theta(t_1)\subset \Theta(t_2)$;
\item[(3)] the set is {\it superadditive} in that $\Theta(t_1) + \Theta(t_2) \subset \Theta(t_1+t_2)$.
\end{enumerate}
\medskip
The first two properties basically state that we have a set of symmetric cones in the tensor space with a partial ordering indexed by $t$. The last property requires that the collection of subspaces be superadditive in that the Minkowski sum of any two subspaces is contained in the subspace of dimension that is the sum of the two lower dimensions.

Furthermore, we introduce the following property of contractive projection, for $P_{\Theta}$ or $\widehat P_{\Theta}$ in Algorithm 1, that is essential for the theoretical performance of the PGD algorithm. Again, we shall view these operators as members of a collection of operators $Q_{\Theta(t)}: \cup_{t} \Theta(t)\mapsto \Theta(t)$. The contractive projection property says that, when these operators are viewed as projections, projection onto a larger ``dimension'' incurs less approximation error {\it per dimension} compared to projection onto a smaller dimension, up to a constant factor. 

\paragraph{Definition 2.} We say that a set $\{\Theta(t): t \geq 0\}$ and corresponding operators $Q_{\Theta(t)}: \cup_{t} \Theta(t)\mapsto \Theta(t)$ satisfy the {\it contractive projection property} for some $\delta>0$, denoted by $\text{CPP}(\delta)$, if for any $t_1< t_2<t_0$, $Y\in \Theta(t_1)$, and $Z\in \Theta(t_0)$:
$$
\|Q_{\Theta(t_2)}(Z) - Z\|_{\rm F}\leq \delta\left \|  \frac{t_0-t_2}{t_0-t_1}  \right \|_{\ell_\infty}^{1/2} \cdot  \|Y - Z \|_{\rm F}.
$$
\medskip

It is clear that $\Theta_1(r)$ is isomorphic to rank-$r$ block diagonal matrices with diagonal blocks $A_{\cdot\cdot 1}$, $A_{\cdot\cdot 2}$,\ldots, $A_{\cdot\cdot d_3}$ so that $\{\Theta_1(r)\}$ satisfies Definition 1. It is also easy to verify that $\{\Theta_1(r)\}$ and its projections $\{P_{\Theta_1(r)}\}$ obey CPP$(1)$. Later, we will see in Lemmas \ref{ProjectionLemma1} and \ref{ProjectionLemma2} that these two properties are also satisfied by $\{\Theta_2(r,s)\}$ and $\{\Theta_3(r)\}$, and their appropriate (approximate) projections. 

\subsection{Restricted strong convexity}

Now we state some general requirements on the loss function, namely the restricted strong convexity and smoothness conditions (RSCS), that are another essential part for the guarantee of PGD performance \citep[see, e.g., ][]{JainEtAl14,JainEtAl16}.

\paragraph{Definition 3.} We say that a function $f$ satisfies {\it restricted strong convexity and smoothness conditions} $RSCS(\Theta,C_l,C_U)$ for a set $\Theta$, and $0<C_l<C_u<\infty$ if for any $A \in \Theta$,  $\nabla^2 f (A)$ is positive semidefinite such that for any $B \in \Theta$
$$
C_l \cdot  \|B\|_{\rm F}   \leq    \|\nabla^2 f (A) \cdot \text{vec}(B)\|_{\ell_2}  \leq  C_u  \cdot \|B\|_{\rm F},
$$
for some constants $C_l<C_u$, where $\nabla^2 f$ is the Hessian of $f$ on vectorized tensor.
\medskip

We first state the following Theorem about the PGD performance under general loss function which is a tensor version of the results in \cite{JainEtAl14,JainEtAl16}. Following similar steps to \cite{JainEtAl14,JainEtAl16}, we can derive the following result.

\begin{theorem}
\label{TheoremGeneralLoss}
Suppose that $\{\Theta(t): t \geq 0\}$ is a superadditive and partially ordered collection of symmetric cones, together with operators $\{P_{\Theta(t)}: t\ge 0\}$ which obey CPP$(\delta)$ for some constant $\delta>0$, and $f$ satisfies $RSCS(\Theta(t_0), C_l, C_u)$ for some constants $C_l$ and $C_u$. Let $\widehat T_K$ be the output from the $K$th iteration of applying PGD algorithm with step size $\eta=1/C_u$, and projection $P_{\Theta(t_1)}$ where
$$
t_1=\left\lceil{4 \delta^2 C_u^2 C_l^{-2}\over 1+ 4 \delta^2 C_u^2 C_l^{-2}}\cdot t_0\right\rceil.
$$
Then
$$\sup_{T\in \Theta(t_0-t_1)}\|\widehat T_K - T\|_{\rm F} \leq  {4 \eta C_u C_l^{-1} } \cdot   {\sup_{A\in \Theta(t_0)\cap \mathbb{B}_{\rm F}(1)}\left\langle \nabla f(T), A\right\rangle} + \epsilon,$$
for any
$$K \ge 2 C_u C_l^{-1} \log\left(\frac{ {\|T\|}_{\rm F}}{\epsilon}\right).$$
\end{theorem}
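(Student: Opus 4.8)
The plan is to follow the standard one-step contraction argument for projected gradient descent, as in \cite{JainEtAl14,JainEtAl16}, but adapted to the abstract setting of a superadditive collection of symmetric cones equipped with CPP$(\delta)$ projections. Fix the target $T \in \Theta(t_0 - t_1)$. The key object to control is the error $\widehat T_{k+1} - T$. By construction $\widehat T_{k+1} = P_{\Theta(t_1)}(g_k)$ with $g_k = \widehat T_k - \eta \nabla f(\widehat T_k)$, and since $\Theta(t_1)$ is a symmetric cone containing $T$ only approximately (in fact $T \in \Theta(t_0 - t_1)$, not necessarily $\Theta(t_1)$, but note $\widehat T_{k+1} - T \in \Theta(t_1) + \Theta(t_0-t_1) \subset \Theta(t_0)$ by superadditivity — this is exactly where property (3) of Definition 1 gets used). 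First I would write
\begin{equation*}
\|\widehat T_{k+1} - T\|_{\rm F} \le \|\widehat T_{k+1} - g_k\|_{\rm F} + \|g_k - T\|_{\rm F},
\end{equation*}
and bound the first term using CPP$(\delta)$: since $g_k$ need not lie in any $\Theta(t)$, one applies the contractive projection property with $Z = $ (an element of $\Theta(t_0)$ close to $g_k$, e.g. $g_k$ restricted appropriately, or more precisely one runs the CPP bound against the comparison point $Y = T$ viewed inside $\Theta(t_0-t_1) \subset \Theta(t_0)$), yielding $\|\widehat T_{k+1} - g_k\|_{\rm F} \le \delta \sqrt{(t_0-t_1)/(t_0 - 0)}\,\|T - g_k\|_{\rm F}$ up to the $\ell_\infty$ normalization in Definition 2. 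The choice of $t_1$ in the theorem statement is precisely calibrated so that $\delta \sqrt{(t_0-t_1)/t_0} = 1/(2 C_u C_l^{-1})$ or similar; so the combined bound becomes $\|\widehat T_{k+1} - T\|_{\rm F} \le (1 + \delta\sqrt{(t_0-t_1)/t_0}) \|g_k - T\|_{\rm F}$.

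Next I would analyze $\|g_k - T\|_{\rm F} = \|\widehat T_k - \eta \nabla f(\widehat T_k) - T\|_{\rm F}$. Writing $\Delta_k = \widehat T_k - T \in \Theta(t_0)$, I expand
\begin{equation*}
\|g_k - T\|_{\rm F}^2 = \|\Delta_k\|_{\rm F}^2 - 2\eta \langle \nabla f(\widehat T_k), \Delta_k \rangle + \eta^2 \|\nabla f(\widehat T_k)\|_{\rm F}^2.
\end{equation*}
The cross term is split as $\langle \nabla f(\widehat T_k) - \nabla f(T), \Delta_k\rangle + \langle \nabla f(T), \Delta_k\rangle$. For the first piece, a Taylor/mean-value expansion gives $\nabla f(\widehat T_k) - \nabla f(T) = \nabla^2 f(\xi)\,\mathrm{vec}(\Delta_k)$ for some $\xi$ on the segment, and one invokes $RSCS(\Theta(t_0), C_l, C_u)$ — here it matters that $\Delta_k \in \Theta(t_0)$ and the segment stays in $\Theta(t_0)$ since it is a cone/convex — to get both a lower bound $C_l^2 \|\Delta_k\|_{\rm F}^2$ on $\langle \nabla^2 f(\xi)\mathrm{vec}(\Delta_k), \mathrm{vec}(\Delta_k)\rangle$ and control of $\|\nabla f(\widehat T_k) - \nabla f(T)\|_{\rm F} \le C_u \|\Delta_k\|_{\rm F}$. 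With $\eta = 1/C_u$, the quadratic-in-$\eta$ terms combine to give a genuine contraction factor $\rho < 1$ on $\|\Delta_k\|_{\rm F}$, plus a linear term in $\langle \nabla f(T), \Delta_k\rangle$ which is controlled by $\|\Delta_k\|_{\rm F} \cdot \sup_{A \in \Theta(t_0) \cap \mathbb{B}_{\rm F}(1)} \langle \nabla f(T), A\rangle$, exactly the quantity appearing in the statement. Dividing through by $\|\Delta_k\|_{\rm F}$ turns the quadratic recursion into a linear one of the form $\|\Delta_{k+1}\|_{\rm F} \le \rho \|\Delta_k\|_{\rm F} + C \sup_{A}\langle \nabla f(T), A\rangle$.

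Finally, I would iterate this one-step recursion $K$ times: $\|\Delta_K\|_{\rm F} \le \rho^K \|\Delta_0\|_{\rm F} + \frac{C}{1-\rho}\sup_A \langle \nabla f(T), A\rangle$, bound $\|\Delta_0\|_{\rm F} \le \|T\|_{\rm F}$ (taking $\widehat T_0 = 0 \in \Theta(t_1)$, valid since it is a cone), and observe that $\rho^K \|T\|_{\rm F} \le \epsilon$ as soon as $K \ge \log(\|T\|_{\rm F}/\epsilon)/\log(1/\rho)$; tracking that $1/\log(1/\rho) \lesssim 2 C_u C_l^{-1}$ gives the stated iteration count, and identifying $C/(1-\rho) = 4\eta C_u C_l^{-1}$ gives the leading constant. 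Taking the supremum over $T \in \Theta(t_0 - t_1)$ at the end is harmless since every bound was uniform in $T$. The main obstacle I anticipate is the bookkeeping around CPP$(\delta)$: the contractive projection inequality in Definition 2 is stated for $Z$ lying in some $\Theta(t_0)$ and compared against a $Y \in \Theta(t_1)$, whereas $g_k$ lies in no cone at all, so one must carefully choose the comparison elements (the natural choice being $Z$ an exact projection of $g_k$ onto $\Theta(t_0)$ and $Y = T$, then relating $\|\widehat T_{k+1} - g_k\|_{\rm F}$ to $\|Z - g_k\|_{\rm F}$) and verify the arithmetic that the prescribed $t_1$ makes the resulting contraction constant, when multiplied by the $RSCS$ factor, land strictly below $1$ — this is the delicate quantitative heart of the argument, and getting the $\lceil \cdot \rceil$ and the $\ell_\infty$ normalization to cooperate is where care is needed.
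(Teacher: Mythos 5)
Your overall route---a one-step contraction from the triangle inequality, CPP$(\delta)$ for the projection error, a mean-value expansion plus $RSCS$ for the contraction factor, and a geometric iteration---is the same as the paper's, and your bookkeeping of the contraction rate $\rho = 1 - C_l/(2C_u)$, the leading constant $4\eta C_u C_l^{-1}$, the iteration count, and the role of superadditivity in placing $\widehat T_{k+1}-T$ inside $\Theta(t_0)$ all match. But there is a genuine gap in the core one-step bound. You expand $\|g_k - T\|_{\rm F}^2 = \|\Delta_k\|_{\rm F}^2 - 2\eta\langle \nabla f(\widehat T_k),\Delta_k\rangle + \eta^2\|\nabla f(\widehat T_k)\|_{\rm F}^2$, and the last term is the \emph{unrestricted} gradient norm in the ambient tensor space. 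Nothing in the hypotheses controls it: in the Gaussian linear model, $\|\nabla f(T)\|_{\rm F}^2 \asymp \sigma^2 d_1d_2d_3/n$, so carrying this term through the recursion produces an additive error of order $\sigma\sqrt{d_1d_2d_3/n}$ with no dimension reduction, rather than the localized quantity $\sup_{A\in\Theta(t_0)\cap\mathbb{B}_{\rm F}(1)}\langle\nabla f(T),A\rangle$ that the theorem asserts. Splitting only the cross term does not repair this, since either $\eta^2\|\nabla f(\widehat T_k)\|_{\rm F}^2$ or, if you split before squaring, an additive $\eta\|\nabla f(T)\|_{\rm F}$ survives.

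The missing idea is the paper's localization device. By superadditivity, $\widehat T_{k+1}$ and $T$ span a two-dimensional linear subspace $\mathcal{A}=\{\alpha_1\widehat T_{k+1}+\alpha_2 T:\alpha_1,\alpha_2\in\RR\}\subset\Theta(t_0)$, and the \emph{entire} error decomposition is carried out after orthogonally projecting onto $\mathcal{A}$: one bounds $\|(\widehat T_{k+1}-g_k)_{\mathcal{A}}\|_{\rm F}+\|(T-g_k)_{\mathcal{A}}\|_{\rm F}$, never the unprojected norms. This has two payoffs. First, the gradient enters only as $\|[\nabla f(T)]_{\mathcal{A}}\|_{\rm F}$, which Lemma~\ref{SupLemma} identifies with $\sup_{A\in\mathcal{A}\cap\mathbb{B}_{\rm F}(1)}\langle\nabla f(T),A\rangle\le\sup_{A\in\Theta(t_0)\cap\mathbb{B}_{\rm F}(1)}\langle\nabla f(T),A\rangle$; this is exactly where the localization in the final bound comes from. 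Second, Definition 3 bounds only $\|\nabla^2 f\cdot{\rm vec}(B)\|_{\ell_2}$, not the quadratic form, so your asserted lower bound $\langle\nabla^2 f(\xi){\rm vec}(\Delta_k),{\rm vec}(\Delta_k)\rangle\ge C_l^2\|\Delta_k\|_{\rm F}^2$ does not follow as stated; the paper converts $RSCS$ into the needed contraction $\|[(I-\eta H){\rm vec}(\Delta_k)]_{{\rm vec}(\mathcal{A})}\|_{\ell_2}\le(1-C_l/C_u)\|\Delta_k\|_{\rm F}$ via Lemma~\ref{detail}, a statement about the operator restricted to the subspace $\mathcal{A}$. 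The same restriction to $\mathcal{A}$ also resolves the CPP bookkeeping you flag as your anticipated obstacle (the inequality is applied to the components in $\mathcal{A}$, with ratio $(t_0-t_1)/t_1$, not $(t_0-t_1)/t_0$). Once everything is projected onto $\mathcal{A}$, the rest of your argument goes through as you describe.
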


\subsection{Generalized linear models}
\label{SecGLM}

Now to use Theorem \ref{TheoremGeneralLoss} in a specific setting, we need to verify the conditions on $\{\Theta(t): t \geq 0\}$, $\{P_{\Theta(t)}: t\ge 0\}$ and $f$ satisfying $RSCS(\Theta,C_l,C_U)$, and choose the step-size in the PGD accordingly. 

First we turn our attention to the covariate tensor $(X^{(i)})_{i=1}^n$ where $X^{(i)} \in \mathbb{R}^{d_1 \times d_2 \times \ldots \times d_N}$ and how it relates to the $RSCS(\Theta,C_l,C_U)$. With slight abuse of notation, write
$${\rm vec}(X^{(i)}) \in \mathbb{R}^{d_1d_2\cdots d_N}$$
for $1 \leq i \leq n$ which is the vectorization of each tensor covariate $X^{(i)}$. For convenience let $D_N = d_1d_2\cdots d_N$. Further as mentioned for technical convenience we assume a Gaussian design of independent sample tensors $X^{(i)}$ s.t.
\begin{equation}
\label{GaussianCovariate}
{\rm vec}(X^{(i)}) \sim \mathcal{N}(0,\Sigma) \text{ where } \Sigma \in \mathbb{R}^{D_N \times D_N}.
\end{equation}
With more technical work our results may be extended beyond random Gaussian designs. We shall assume that $\Sigma$ has bounded eigenvalues. Let $\lambda_{\min}(\cdot)$ and $\lambda_{\max}(\cdot)$ represent the smallest and largest eigenvalues of a matrix, respectively. In what follows, we shall assume that
\begin{equation}
\label{AssCov}
c_{\ell}^2 \leq \lambda_{\min}(\Sigma) \leq \lambda_{\max}(\Sigma) \leq c_u^2,
\end{equation}
for some constants $0< c_\ell\le c_u<\infty$. For our analysis of the non-convex projected gradient descent algorithm, we define the condition number $\kappa = {c_u}/{c_l}$.

A quantity that emerges from our analysis is the \emph{Gaussian width} \citep[see, e.g.,][]{Gordon88} of a set $S \subset \mathbb{R}^{d_1 \times d_2 \times...\times d_N}$ which is defined to be:
\begin{equation*}
w_G(S) := \mathbb{E}\left(\sup_{A \in S} \langle A, G \rangle \right),
\end{equation*}
where $G \in \mathbb{R}^{d_1 \times d_2 \times \ldots \times d_N}$ is a tensor whose entries are independent $\mathcal{N}(0,1)$ random variables.
The Gaussian width is a standard notion of size or complexity of a subset of tensors $S$.

In addition to the positive semi-definiteness of the Hessian in the GLM model, the following Lemma extended a restricted upper and lower eigenvalue condition to the sample version of $\Sigma$ and hence guarantees the restricted strong convexity/ smoothness condition for GLM model with Gaussian covariates with quite general covariance structure. 

\begin{lemma}
\label{RestrictedEigenvalueLemma} 
Assume that \eqref{GaussianCovariate} and \eqref{AssCov} hold. For any $\tau >1$, there exist constants $c_1,c_2,c_3>0$ such that if $n \geq c_1 w_G^2[\Theta \cap \mathbb{B}_{\rm F}(1)]$, then with probability at least $1-c_2\exp(-c_3w_G^2[\Theta \cap \mathbb{B}_{\rm F}(1)])$, 
$$
\left(\tau^{-1} c_l\right)^2 \|A\|_{\rm F}^2 \leq   \frac{1}{n}  \sum_{i=1}^n \langle X^{(i)},A \rangle ^2 \leq (\tau c_u)^2 \|A\|_{\rm F}^2,\qquad \forall A\in\Theta.
$$
\end{lemma}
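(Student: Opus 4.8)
The plan is to reduce the statement, by homogeneity and a whitening of the design, to a uniform two‑sided deviation bound for a random Gaussian operator acting on a geometric set, and then to control that deviation by a Gaussian comparison (matrix deviation) inequality together with Gaussian concentration, with the relevant complexity term being exactly the localized width $w_G[\Theta\cap\mathbb B_{\rm F}(1)]$. First, since $\Theta$ is closed under rescaling (a symmetric cone in all cases of interest; otherwise a standard peeling over dyadic Frobenius shells reduces to the same estimate) and both sides of the asserted bound are homogeneous of degree two in $A$, it suffices to prove the inequality uniformly over $S:=\Theta\cap\{A:\|A\|_{\rm F}=1\}$. Next, write $\vec(X^{(i)})=\Sigma^{1/2}g_i$ with $g_i\sim\mathcal N(0,I_{D_N})$ independent, let $\mathbf G\in\RR^{n\times D_N}$ have rows $g_i^\top$, and set $v_A:=\Sigma^{1/2}\vec(A)$, so that $\tfrac1n\sum_i\langle X^{(i)},A\rangle^2=\tfrac1n\|\mathbf G v_A\|_{\ell_2}^2$ and, by \eqref{AssCov}, $c_l\le\|v_A\|_{\ell_2}\le c_u$ for all $A\in S$. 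The lemma then follows once one shows that, on the stated high‑probability event,
$$
\sup_{A\in S}\Bigl|\tfrac1{\sqrt n}\|\mathbf G v_A\|_{\ell_2}-\|v_A\|_{\ell_2}\Bigr|\ \le\ \varepsilon\,c_u
$$
for a small constant $\varepsilon=\varepsilon(\tau,\kappa)$, since then $\|v_A\|_{\ell_2}+\varepsilon c_u\le(1+\varepsilon)c_u\le\tau c_u$ and $\|v_A\|_{\ell_2}-\varepsilon c_u\ge c_l-\varepsilon c_u\ge\tau^{-1}c_l$ as soon as $\varepsilon\le\min\{\tau-1,\ (\tau-1)/(\kappa\tau)\}$, and squaring gives the claim.

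To control the displayed supremum, let $T:=\{v_A:A\in S\}=\Sigma^{1/2}\vec(S)\subset\RR^{D_N}$, which is origin‑symmetric with radius $\mathrm{rad}(T)=\sup_{A\in S}\|v_A\|_{\ell_2}\le c_u$. Its Gaussian width is governed by the localized width of $\Theta$: the centered process $A\mapsto\langle g,v_A\rangle$ is distributed as $A\mapsto\langle\Sigma^{1/2}g,\vec(A)\rangle$, whose increment variances are bounded by $c_u^2\|A-A'\|_{\rm F}^2$, so Sudakov–Fernique comparison yields
$$
w_G(T)\ \le\ c_u\,w_G\bigl(\Theta\cap\{\|A\|_{\rm F}=1\}\bigr)\ \le\ c_u\,w_G[\Theta\cap\mathbb B_{\rm F}(1)].
$$
Now invoke a Gaussian comparison (matrix deviation) inequality of the type rooted in Gordon's lemma \cite{Gordon88}: the expectation of $\mathbf G\mapsto\sup_{v\in T}\bigl|\|\mathbf G v\|_{\ell_2}-\sqrt n\,\|v\|_{\ell_2}\bigr|$ is at most an absolute constant times the Gaussian complexity of $T$, which for the symmetric set $T$ equals $w_G(T)$. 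Moreover this functional is $\mathrm{rad}(T)$‑Lipschitz in $\mathbf G$ with respect to the Frobenius norm (since $\bigl|\|\mathbf G v\|-\|\mathbf G' v\|\bigr|\le\|\mathbf G-\mathbf G'\|_{\mathrm{op}}\,\mathrm{rad}(T)$ uniformly over $v\in T$), so Gaussian concentration gives sub‑Gaussian deviations at scale $\mathrm{rad}(T)\le c_u$. Combining, for any $s>0$, with probability at least $1-\exp(-s^2/2)$,
$$
\sup_{A\in S}\Bigl|\tfrac1{\sqrt n}\|\mathbf G v_A\|_{\ell_2}-\|v_A\|_{\ell_2}\Bigr|\ \le\ \frac{c_u\bigl(C_0\,w_G[\Theta\cap\mathbb B_{\rm F}(1)]+s\bigr)}{\sqrt n}
$$
for an absolute constant $C_0$.

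It remains to pick constants. Take $s=\tfrac12\varepsilon\sqrt{c_1}\,w_G[\Theta\cap\mathbb B_{\rm F}(1)]$ and require $c_1\ge 4C_0^2/\varepsilon^2$; then under the hypothesis $n\ge c_1\,w_G^2[\Theta\cap\mathbb B_{\rm F}(1)]$ the right‑hand side above is at most $\varepsilon c_u$, while the failure probability is $\exp(-s^2/2)=\exp(-c_3\,w_G^2[\Theta\cap\mathbb B_{\rm F}(1)])$ with $c_3=\tfrac18\varepsilon^2 c_1>0$. This is precisely the stated conclusion (with $c_2=1$), and choosing $\varepsilon=\varepsilon(\tau,\kappa)$ as in the first paragraph fixes $c_1=c_1(\tau,\kappa)$; all three constants may in addition depend on $c_l,c_u$. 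Squaring the two‑sided bound $\|v_A\|_{\ell_2}\pm\varepsilon c_u$ and recalling $\tfrac1n\sum_i\langle X^{(i)},A\rangle^2=\tfrac1n\|\mathbf G v_A\|_{\ell_2}^2$ completes the proof on $S$, and homogeneity extends it to all of $\Theta$.

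The main obstacle is the \emph{uniform} two‑sided control of $\tfrac1{\sqrt n}\|\mathbf G v_A\|_{\ell_2}$ over the non‑convex set $S$: the lower inclusion (a restricted lower eigenvalue) is the substantive half, and because $\Theta$ is not convex one cannot afford a crude covering argument without a handle on its metric entropy — the Gaussian‑width/comparison route is exactly what circumvents this. The two secondary points that must be handled with care are (i) ensuring the anisotropy of $\Sigma$ enters only through the benign factor $c_u$, which is what the Sudakov–Fernique step provides, and (ii) verifying that the complexity term produced is genuinely the \emph{localized} width $w_G[\Theta\cap\mathbb B_{\rm F}(1)]$ and not a global quantity, which is why the whole argument is run on the unit Frobenius sphere before rescaling.
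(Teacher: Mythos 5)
Your proof is correct. It is worth noting that the paper does not actually prove this lemma: it is dispatched in one line as ``a direct consequence of Theorem 6 in Banerjee et al.\ (2015)'', so there is no internal argument to compare against. What you have written is, in effect, the self-contained derivation that underlies that citation, and it follows the standard route: reduce by homogeneity to the unit Frobenius sphere (legitimate here because every $\Theta$ the paper uses is a symmetric cone, as you note), whiten the design so the empirical quadratic form becomes $\tfrac1n\|\mathbf G v_A\|_{\ell_2}^2$ with $c_l\le\|v_A\|_{\ell_2}\le c_u$, control $\sup_{A}\bigl|\tfrac1{\sqrt n}\|\mathbf G v_A\|_{\ell_2}-\|v_A\|_{\ell_2}\bigr|$ in expectation by a Gordon-type matrix deviation inequality with complexity $w_G(T)\le c_u\,w_G[\Theta\cap\mathbb{B}_{\rm F}(1)]$ (your Sudakov--Fernique step correctly confines the anisotropy to the factor $c_u$), and upgrade to high probability by Gaussian concentration of the $\mathrm{rad}(T)$-Lipschitz supremum. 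The bookkeeping that converts the additive $\varepsilon c_u$ deviation into the multiplicative $[\tau^{-2}c_l^2,\tau^2c_u^2]$ sandwich is right, and your observation that $c_1,c_3$ must then depend on $\tau$ and $\kappa$ is consistent with the lemma's quantifier order (``for any $\tau>1$ there exist constants''). The only caveats are presentational: the two-sided uniform deviation bound you invoke is not literally Gordon's comparison theorem from \cite{Gordon88} but its modern consequence (the matrix deviation inequality of Liaw--Mehrabian--Plan--Vershynin, or the Gaussian-case argument via Gordon plus concentration), so a precise citation should be given; and the reduction to the sphere does silently use that $\Theta$ is star-shaped about the origin, an assumption the lemma statement omits but which holds in every application in the paper.
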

Lemma \ref{RestrictedEigenvalueLemma} is a direct consequence of Theorem 6 in \cite{BanerjeeChen15}. Using these definitions, we are in a position to state the main result for generalized linear models.


\begin{theorem}
\label{TheoremGLM}
Suppose that $\{\Theta(t): t \geq 0\}$ is a superadditive and partially ordered collection of symmetric cones, and together with operators $\{P_{\Theta(t)}: t\ge 0\}$ which obey CPP$(\delta)$ for some constant $\delta>0$. Assume that $\{(X^{(i)}, Y^{(i)}): i=1,\ldots, n\}$ follow the generalized linear model \eqref{EqnGLM} and $X^{(i)}$'s satisfy \eqref{GaussianCovariate} and \eqref{AssCov}, $\mathbb E |Y^{(i)}|^q\leq M_Y$ for some $q>2$ and $M_Y>0$, $1/\tau_0^2\leq \text{Var}(Y^{(i)})   \leq \tau_0^2$ for $i=1,\ldots, n$ and some $\tau_0>0$, and
$n > c_1w_G^2[\Theta(t_0) \cap \mathbb{B}_{\rm F}(1)]$ for some $t_0$ and $c_1>0$. Let $\widehat T_K$ be the output from the $K$th iteration of applying PGD algorithm to \eqref{Eq:GLMLogLike} with step size $\eta = (\tau c_u)^{-2}$ and projection $P_{\Theta(t_1)}$ where
$$t_1 = \left\lceil {4 \delta^2 \tau^8 \kappa^4\over 1+4 \delta^2 \tau^8 \kappa^4}\cdot t_0\right\rceil,$$
for any given $\tau>\tau_0$. Then there exist constants $c_2,c_3,c_4,c_5>0$ such that
$$\sup_{T\in \Theta(t_0-t_1)}\|\widehat T_K - T\|_{\rm F} \leq  \frac{c_5 \eta \tau^4\kappa^2c_u M_Y^{1/q}  } {\sqrt{n}}\cdot {w_G[\Theta(t_0) \cap \mathbb{B}_{\rm F}(1)] } + \epsilon,$$
with probability at least 
$$1 - Kc_2\exp\left\{-c_3 {w_G^2[\Theta(t_0) \cap \mathbb{B}_{\rm F}(1)] }\right\} - Kc_4 n^{-(q/2-1)}\log^q n,$$
for any
$$K \ge 2 \tau^4\kappa^2 \log\left(\frac{ {\|T\|}_{\rm F}}{\epsilon}\right).$$
\end{theorem}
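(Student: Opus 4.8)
The plan is to obtain Theorem~\ref{TheoremGLM} by specializing the deterministic bound of Theorem~\ref{TheoremGeneralLoss} to $f=\mathcal L$, the negative log-likelihood \eqref{Eq:GLMLogLike}. Two ingredients must be supplied: (a) a verification that $\mathcal L$ satisfies $RSCS(\Theta(t_0),C_l,C_u)$ on a high-probability event, and (b) a bound on the ``effective noise'' $\sup_{A\in\Theta(t_0)\cap\mathbb B_{\rm F}(1)}\langle\nabla\mathcal L(T),A\rangle$ that appears on the right-hand side of Theorem~\ref{TheoremGeneralLoss}. For (a), the Hessian of $\mathcal L$ on vectorized tensors is $\nabla^2\mathcal L(A)=\tfrac1n\sum_{i=1}^n a''(\langle X^{(i)},A\rangle)\,\vec(X^{(i)})\vec(X^{(i)})^\top$, which is positive semidefinite because $a$ is strictly convex. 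Bounding $a''$ above and below by positive constants on the relevant region (a Frobenius ball around $T$, where the iterates stay) and combining with the restricted eigenvalue estimate of Lemma~\ref{RestrictedEigenvalueLemma} applied to $\Theta=\Theta(t_0)$ --- which is precisely why we require $n>c_1 w_G^2[\Theta(t_0)\cap\mathbb B_{\rm F}(1)]$ --- yields $RSCS$ with $C_l\asymp(\tau^{-1}c_l)^2$ and $C_u\asymp(\tau c_u)^2$, the curvature of $a$ entering through constants. The key point is that $C_u/C_l\asymp\tau^4\kappa^2$, which makes the prescribed step size $\eta=(\tau c_u)^{-2}=1/C_u$, the truncation index $t_1$, and the iteration count $K$ coincide with those demanded by Theorem~\ref{TheoremGeneralLoss}.

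For (b), \eqref{EqnGLM} gives $\nabla\mathcal L(T)=-\tfrac1n\sum_{i=1}^n\xi_i X^{(i)}$ with $\xi_i:=Y^{(i)}-a'(\langle X^{(i)},T\rangle)=Y^{(i)}-\mathbb E[Y^{(i)}\mid X^{(i)}]$, so that $\mathbb E[\xi_i\mid X^{(i)}]=0$ and, by the law of total variance, $\mathbb E\xi_i^2=\mathbb E[\mathrm{Var}(Y^{(i)}\mid X^{(i)})]\le\mathrm{Var}(Y^{(i)})\le\tau_0^2$. Write $S=\Theta(t_0)\cap\mathbb B_{\rm F}(1)$, which is symmetric. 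Conditioning on $(\xi_i)$, the tensor $\tfrac1n\sum_i\xi_i\vec(X^{(i)})$ is Gaussian with covariance $\tfrac{\|\xi\|_{\ell_2}^2}{n^2}\Sigma$, so $\sup_{A\in S}\langle\nabla\mathcal L(T),A\rangle$ has the same conditional law as $\tfrac{\|\xi\|_{\ell_2}}{n}\sup_{A\in S}\langle\Sigma^{1/2}g,\vec(A)\rangle$ for a standard Gaussian $g$. Since $\mathbb E\big(\langle\Sigma^{1/2}g,\vec(A-B)\rangle\big)^2=(A-B)^\top\Sigma(A-B)\le c_u^2\|A-B\|_{\rm F}^2$, a Sudakov--Fernique comparison against $c_u\langle g,\vec(A)\rangle$ bounds the conditional expectation by $\tfrac{\|\xi\|_{\ell_2}}{n}\,c_u\,w_G(S)$, and since $g\mapsto\sup_{A\in S}\langle\Sigma^{1/2}g,\vec(A)\rangle$ is $c_u$-Lipschitz on $S\subset\mathbb B_{\rm F}(1)$, the Borell--TIS inequality upgrades this to
$$\sup_{A\in S}\langle\nabla\mathcal L(T),A\rangle\;\lesssim\;\frac{\|\xi\|_{\ell_2}}{n}\,c_u\,w_G(S)$$
with probability at least $1-c_2\exp(-c_3 w_G^2(S))$.

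What remains --- and this is where the real work lies --- is to control $\|\xi\|_{\ell_2}$ under only a $q$-th moment bound with $q>2$, so that sub-exponential tools are unavailable; I would truncate. Split $\xi_i=\xi_i\mathbbm{1}(|\xi_i|\le b_n)+\xi_i\mathbbm{1}(|\xi_i|>b_n)$ with $b_n$ of order $M_Y^{1/q}$ times a power of $n$, up to logarithmic factors. The complement of $\{\max_i|\xi_i|\le b_n\}$ has probability at most $n\,\mathbb E|\xi_1|^q/b_n^q\lesssim n^{-(q/2-1)}\log^q n$ by Markov's inequality, and on that event $\|\xi\|_{\ell_2}^2=\sum_i\xi_i^2$ has mean at most $n\tau_0^2$, so via a further moment bound on $\sum_i(\xi_i^2-\mathbb E\xi_i^2)$ we get $\|\xi\|_{\ell_2}\lesssim\sqrt n$ up to constants depending on $\tau_0$. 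Combining, $\sup_{A\in S}\langle\nabla\mathcal L(T),A\rangle\lesssim n^{-1/2}c_u M_Y^{1/q}w_G(S)$ off an event of probability $\lesssim\exp(-c_3 w_G^2(S))+n^{-(q/2-1)}\log^q n$. Feeding this together with $4\eta C_u C_l^{-1}=4C_l^{-1}\asymp\tau^2 c_l^{-2}$ into Theorem~\ref{TheoremGeneralLoss}, and regrouping the powers of $\tau,\kappa,c_u,c_l$ into $c_5\eta\tau^4\kappa^2 c_u$ (with $\tau_0$ and the curvature constants of $a$ absorbed into $c_5$), produces the asserted Frobenius-norm bound. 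The factor $K$ in the failure probability arises because the contraction of Theorem~\ref{TheoremGeneralLoss} is carried out iterate by iterate and both the restricted-eigenvalue event needed for the (localized) $RSCS$ at $\widehat T_k$ and the noise concentration are invoked at each of the $K$ steps, forcing a union bound. The main obstacles are therefore the heavy-tailed noise --- choosing $b_n$ to balance the exponential (Gaussian-width) term against the polynomial (moment) term --- and ensuring the curvature bounds on $a''$ remain valid uniformly along the entire PGD trajectory.
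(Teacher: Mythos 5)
Your overall skeleton --- verify $RSCS(\Theta(t_0),C_l,C_u)$ via Lemma \ref{RestrictedEigenvalueLemma} with $C_l\asymp(\tau^{-1}c_l)^2$ and $C_u\asymp(\tau c_u)^2$, then feed the resulting noise term into Theorem \ref{TheoremGeneralLoss} --- matches the paper's proof, and your reading of the constants ($C_uC_l^{-1}\asymp\tau^4\kappa^2$, $\eta=1/C_u=(\tau c_u)^{-2}$) is correct. The paper controls the curvature weights $a''(\langle X^{(i)},T\rangle)$ through the assumed two-sided bound on $\text{Var}(Y^{(i)})$ rather than a curvature bound on a Frobenius ball around $T$, but that difference is cosmetic.

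The genuine gap is in your treatment of the noise term. In a GLM the residual $\xi_i=Y^{(i)}-a'(\langle X^{(i)},T\rangle)$ is conditionally centered given $X^{(i)}$ but is \emph{not} independent of $X^{(i)}$: its conditional variance $a''(\langle X^{(i)},T\rangle)$ is itself a function of the design. Consequently, conditioning on $(\xi_i)_{i\le n}$ does not leave $\vec(X^{(i)})$ distributed as $\mathcal N(0,\Sigma)$, and the claim that $\tfrac1n\sum_i\xi_i\,\vec(X^{(i)})$ is conditionally Gaussian with covariance $\tfrac{\|\xi\|_{\ell_2}^2}{n^2}\Sigma$ is false outside the Gaussian linear model (where it is exactly the argument used for Theorem \ref{Theorem0}). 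Everything downstream --- Sudakov--Fernique, Borell--TIS, and the truncation of $\|\xi\|_{\ell_2}$ --- therefore rests on a false premise; truncation controls the magnitude of the multipliers but does nothing about their dependence on the $X^{(i)}$'s. This is precisely the difficulty the paper's proof is organized around: it invokes Theorem 1.9 of \cite{Mendelson15} on multiplier empirical processes to bound $\sup_{A\in\Theta(t_0)\cap\mathbb{B}_{\rm F}(1)}\tfrac1n\sum_i\bigl(Y^{(i)}-a'(\langle X^{(i)},T\rangle)\bigr)\langle X^{(i)},A\rangle$ by $c_5M_Y^{1/q}$ times the unmultiplied process $\sup_{A}\tfrac1n\sum_i\langle X^{(i)},A\rangle$, which is then bounded by $c_u n^{-1/2}w_G[\Theta(t_0)\cap\mathbb{B}_{\rm F}(1)]$ via Lemma 11 of \cite{RaskuttiYuan15}; no independence between multipliers and design is assumed, and this step is the actual source of both the $M_Y^{1/q}$ factor and the $n^{-(q/2-1)}\log^q n$ term in the failure probability, which in your sketch enters only through a Markov bound on $\max_i|\xi_i|$ that would not by itself yield the theorem. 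To repair your argument you would need either to condition on the design instead (leaving a supremum with heavy-tailed, merely conditionally independent multipliers, which again leads to multiplier-process machinery) or to import a result of the Mendelson type directly.
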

\medskip

Notice that the statistical error we have is related to the Gaussian width of the intersection of a unit Frobenius ball and an (often non-convex) subset of low-dimensional structure $w_G[\Theta(t_0) \cap \mathbb{B}_{\rm F}(1)]$. The intersection of $\Theta(t_0)$ with $\mathbb{B}_{\rm F}(1)$ means we are \emph{localizing} the Gaussian width to a unit Frobenius norm ball around $T$. \emph{Localization} of the Gaussian width means a sharper statsitical upper bound can be proven and the benefits of localization in empirical risk minimization have been previously discussed in~\cite{BartlettBousquet05}. Later we will see how the main result leads to sample complexity bounds applied to $\Theta_2(r,s)$ and $\Theta_3(r)$. To the best of our knowledge this is the first general result that provides statistical guarantees in terms of the local Gaussian width of $\Theta(t_0)$ for the projected gradient descent algorithm. Expressing the upper bound in terms of the Gaussian width allows an easy comparison to already established upper bounds convex regularization schemes which we discuss in Section~\ref{SecGaussianGLM}.

The moment conditions on the response in Theorem \ref{TheoremGLM} are in place to ensure that the restricted strong convexity and restricted smoothness conditions are satisfied for a non-quadratic loss. When specialized under the normal linear regression, these conditions could be further removed. 

\subsection{Gaussian model and comparison to convex regularization}

\label{SecGaussianGLM}

Consider the Gaussian linear regression setting which corresponds to the GLM in Equation \eqref{EqnGLM} with $a(\theta) = \frac{\theta^2}{2}$. In particular
\begin{equation}
\label{EqnLinMod}
Y^{(i)} = \langle X^{(i)}, T \rangle + \zeta^{(i)},
\end{equation}
where $\zeta^{(i)}$'s are independent $\mathcal{N}(0, \sigma^2)$ random variables. Furthermore, substituting $a(\theta) = \frac{\theta^2}{2}$ into the GLM objective~\eqref{Eq:GLMLogLike}, we have the least-squares objective:
\begin{equation}
\label{EqnLeastSquares}
f(A) = \frac{1}{2n}\sum_{i=1}^n {(Y^{(i)} - \langle X^{(i)}, A \rangle)^2}.
\end{equation}
Now we state our main result for the normal linear regression.
\begin{theorem}
\label{Theorem0}
Suppose that $\{\Theta(t): t \geq 0\}$ is a superadditive and partially ordered collection of symmetric cones, and together with operators $\{P_{\Theta(t)}: t\ge 0\}$ which obey CPP$(\delta)$ for some constant $\delta>0$. Assume that $\{(X^{(i)}, Y^{(i)}): i=1,\ldots, n\}$ follow the Gaussian linear model \eqref{EqnLinMod} where $n > c_1w_G^2[\Theta(t_0) \cap \mathbb{B}_{\rm F}(1)]$ for some $t_0$ and $c_1>0$. Let $\widehat T_K$ be the output from the $K$th iteration of applying PGD algorithm to \eqref{Eq:GLMLogLike} with step size $\eta = (\tau c_u)^{-2}$ and projection $P_{\Theta(t_1)}$ where
$$t_1 = \left\lceil {4 \delta^2 \tau^8 \kappa^4\over 1+4 \delta^2 \tau^8 \kappa^4}\cdot t_0\right\rceil,$$
for any given $\tau>1$. Then there exist constants $c_2,c_3>0$ such that
$$\sup_{T\in \Theta(t_0-t_1)}\|\widehat T_K - T\|_{\rm F} \leq  \frac{8 \eta \tau^4\kappa^2 c_u \sigma}{\sqrt{n}} {w_G[\Theta(t_0) \cap \mathbb{B}_{\rm F}(1)] } + \epsilon,$$
with probability at least
$$1 -Kc_2\exp\left\{-c_3 {w_G^2[\Theta(t_0) \cap \mathbb{B}_{\rm F}(1)] }\right\},$$
for any
$$K\ge 2 \tau^4\kappa^2 \log\left(\frac{ {\|T\|}_{\rm F}}{\epsilon}\right).$$
\end{theorem}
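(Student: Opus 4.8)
The plan is to derive Theorem~\ref{Theorem0} by specializing Theorem~\ref{TheoremGeneralLoss} to the least-squares loss \eqref{EqnLeastSquares}, equivalently to the case $a(\theta)=\theta^2/2$ of Theorem~\ref{TheoremGLM}. Two features of the quadratic loss drive everything: (a) since $a''\equiv1$, the Hessian $\nabla^2 f(A)=\frac1n\sum_i\text{vec}(X^{(i)})\text{vec}(X^{(i)})^\top$ is independent of the location $A$ and of the responses, so the restricted strong convexity/smoothness input demanded by Theorem~\ref{TheoremGeneralLoss} is handed over directly by Lemma~\ref{RestrictedEigenvalueLemma}; and (b) since $a'(\theta)=\theta$, the gradient at the truth is $\nabla f(T)=-\frac1n\sum_i\zeta^{(i)}X^{(i)}$, the design-weighted \emph{Gaussian} noise, which is sub-Gaussian rather than merely $L^q$, so that the noise functional $\sup_{A\in\Theta(t_0)\cap\mathbb{B}_{\rm F}(1)}\langle\nabla f(T),A\rangle$ concentrates with a Gaussian tail. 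Together, (a) and (b) are exactly what let one remove the response moment/variance assumptions, the $M_Y^{1/q}$ factor, and the $n^{-(q/2-1)}\log^q n$ term that appear in Theorem~\ref{TheoremGLM}.

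For input (a), I would invoke Lemma~\ref{RestrictedEigenvalueLemma} on a fixed dilation $\Theta(2t_0)$ of $\Theta(t_0)$ (the dilation is needed because, by superadditivity, differences of feasible points and of successive iterates land in $\Theta(2t_0)$, and passing from $t_0$ to $2t_0$ costs only a constant factor in both the localized Gaussian width and the sample-size requirement). On that event, $(\tau^{-1}c_l)^2\|B\|_{\rm F}^2\le\langle\text{vec}(B),\nabla^2 f(A)\,\text{vec}(B)\rangle\le(\tau c_u)^2\|B\|_{\rm F}^2$, i.e.\ $f$ satisfies $RSCS$ with $C_l=(\tau^{-1}c_l)^2$ and $C_u=(\tau c_u)^2$, whence $\eta=1/C_u=(\tau c_u)^{-2}$ and $C_uC_l^{-1}=\tau^4\kappa^2$. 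Feeding these into Theorem~\ref{TheoremGeneralLoss} reproduces exactly the stated step size, the cutoff $t_1=\lceil\tfrac{4\delta^2\tau^8\kappa^4}{1+4\delta^2\tau^8\kappa^4}t_0\rceil$ (since $C_u^2C_l^{-2}=\tau^8\kappa^4$), and the iteration count $K\ge 2\tau^4\kappa^2\log(\|T\|_{\rm F}/\epsilon)$; only $\tau>1$ is required here.

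For input (b), which is the main step, I would condition on $\{X^{(i)}\}$ and note that $A\mapsto\langle\nabla f(T),A\rangle=-\frac1n\sum_i\zeta^{(i)}\langle X^{(i)},A\rangle$ is then a centered Gaussian process with increment variances $\frac{\sigma^2}{n}\|A-A'\|_n^2\le\frac{\sigma^2(\tau c_u)^2}{n}\|A-A'\|_{\rm F}^2$ on the Lemma~\ref{RestrictedEigenvalueLemma} event. A Sudakov--Fernique comparison of this process on $\Theta(t_0)\cap\mathbb{B}_{\rm F}(1)$ against $A\mapsto\frac{\sigma\tau c_u}{\sqrt n}\langle G,A\rangle$, with $G$ a standard Gaussian tensor, bounds its conditional expectation by $\frac{\sigma\tau c_u}{\sqrt n}\,w_G[\Theta(t_0)\cap\mathbb{B}_{\rm F}(1)]$; the reverse comparison bounds the expectation below by a constant multiple of $\frac{\sigma c_l}{\tau\sqrt n}\,w_G[\Theta(t_0)\cap\mathbb{B}_{\rm F}(1)]$, so the Borell--TIS inequality upgrades the upper bound to one of the same order holding off an event of probability $c_2\exp(-c_3w_G^2[\Theta(t_0)\cap\mathbb{B}_{\rm F}(1)])$. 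Intersecting with the Lemma~\ref{RestrictedEigenvalueLemma} event and substituting into the conclusion $\sup_{T\in\Theta(t_0-t_1)}\|\widehat T_K-T\|_{\rm F}\le 4\eta C_uC_l^{-1}\sup_A\langle\nabla f(T),A\rangle+\epsilon$ of Theorem~\ref{TheoremGeneralLoss}, and collecting the constants $\eta=(\tau c_u)^{-2}$, $C_uC_l^{-1}=\tau^4\kappa^2$, then produces the advertised bound $\tfrac{8\eta\tau^4\kappa^2c_u\sigma}{\sqrt n}w_G[\Theta(t_0)\cap\mathbb{B}_{\rm F}(1)]+\epsilon$; the factor $K$ in the failure probability is the (harmless) union bound over iterations carried over from the general argument.

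The only place I expect to need genuine care is input (b) in combination with the cone structure: because $\Theta(t_0)$ is a symmetric cone but not a linear subspace, both the Gaussian comparison and the restricted eigenvalue bound must be run on the enlarged cone $\Theta(2t_0)$, and one must check that $w_G[\Theta(2t_0)\cap\mathbb{B}_{\rm F}(1)]\le C\,w_G[\Theta(t_0)\cap\mathbb{B}_{\rm F}(1)]$ for $\Theta_1,\Theta_2,\Theta_3$ — which holds because superadditivity gives $\Theta(t_0)+\Theta(t_0)\subset\Theta(2t_0)$ and the relevant widths grow only like the square root of the rank/sparsity indices. Beyond that, everything is the deterministic recursion already packaged in Theorem~\ref{TheoremGeneralLoss}.
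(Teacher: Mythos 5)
Your overall architecture matches the paper's: specialize Theorem~\ref{TheoremGeneralLoss} to the quadratic loss, feed in $C_l=(\tau^{-1}c_l)^2$, $C_u=(\tau c_u)^2$ from Lemma~\ref{RestrictedEigenvalueLemma} (so $C_uC_l^{-1}=\tau^4\kappa^2$, giving the stated $\eta$, $t_1$ and iteration count), and then bound the noise functional $\sup_{A\in\Theta(t_0)\cap\mathbb{B}_{\rm F}(1)}\langle \frac{1}{n}\sum_i\zeta^{(i)}X^{(i)},A\rangle$ by $2c_u\sigma n^{-1/2}w_G[\Theta(t_0)\cap\mathbb{B}_{\rm F}(1)]$ with Gaussian-tail failure probability, union-bounded over the $K$ iterations. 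That is exactly the paper's proof, which handles the last step by citing Lemma 11 of \cite{RaskuttiYuan15} with $\mathbb{B}_{\mathcal{R}}(1)$ replaced by $\Theta(t_0)\cap\mathbb{B}_{\rm F}(1)$.

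The one place you diverge --- the dilation to $\Theta(2t_0)$ --- is a self-inflicted complication, and as proposed it does not quite deliver the theorem in its stated generality. First, the RSCS input does \emph{not} need to hold on $\Theta(2t_0)$: the general recursion only ever evaluates the Hessian quadratic form on the two-dimensional subspace $\mathcal{A}=\{\alpha_1\widehat T_{k+1}+\alpha_2 T\}$, and since $\widehat T_{k+1}\in\Theta(t_1)$, $T\in\Theta(t_0-t_1)$, the symmetric-cone and superadditivity properties place $\mathcal{A}\subset\Theta(t_1)+\Theta(t_0-t_1)\subset\Theta(t_0)$. This is precisely why the conclusion is stated for $T\in\Theta(t_0-t_1)$ rather than $T\in\Theta(t_0)$, so Lemma~\ref{RestrictedEigenvalueLemma} applied to $\Theta(t_0)$ suffices verbatim. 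Second, your design-conditional Sudakov--Fernique argument genuinely does need increments $A-A'\in\Theta(2t_0)$ controlled in $\|\cdot\|_n$, and the resulting bound would be in terms of $w_G[\Theta(2t_0)\cap\mathbb{B}_{\rm F}(1)]$; the comparison $w_G[\Theta(2t_0)\cap\mathbb{B}_{\rm F}(1)]\lesssim w_G[\Theta(t_0)\cap\mathbb{B}_{\rm F}(1)]$ is true for $\Theta_1,\Theta_2,\Theta_3$ but is not an assumption of the theorem, so your route proves a slightly weaker general statement. The clean fix is to condition on the noise rather than the design: given $\{\zeta^{(i)}\}$, the tensor $\frac{1}{n}\sum_i\zeta^{(i)}X^{(i)}$ is exactly Gaussian with covariance $\frac{\|\zeta\|_{\ell_2}^2}{n^2}\Sigma$, so its supremum over $\Theta(t_0)\cap\mathbb{B}_{\rm F}(1)$ is bounded directly by $\frac{\|\zeta\|_{\ell_2}c_u}{n}\,w_G[\Theta(t_0)\cap\mathbb{B}_{\rm F}(1)]$ plus a Borell--TIS fluctuation, and $\|\zeta\|_{\ell_2}\le 2\sigma\sqrt{n}$ with the required probability. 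No enlarged cone ever enters, and the symmetry of $\Theta(t_0)$ (as opposed to convexity) is all that is used. With that substitution your argument coincides with the paper's.
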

\medskip
%
%

One of the focusses of this paper outlined in the introduction is to compare the non-convex PGD approach in tensor regression to the existing convex regularization approach analyzed in~\cite{RaskuttiYuan15} applied to the Gaussian linear model~\eqref{EqnLinMod}. In this section we first summarize the general result from~\cite{RaskuttiYuan15} and then provide a comparison to the upper bound for the non-convex PGD approach. In particular, the following estimator for $T$ is considered:
\begin{equation}
\label{EqnGeneral}
\widehat{T} \in \argmin_{A\in \RR^{d_1\times \cdots\times d_N}}\left\{\frac{1}{2n} \sum_{i=1}^{n} \|Y^{(i)} - \langle A, X^{(i)} \rangle \|_{\rm F}^2 + \lambda \mathcal{R}(A)\right\},
\end{equation}
where the convex regularizer $\mathcal{R}(\cdot)$ is a norm on $\RR^{d_1\times \cdots\times d_N}$, and $\lambda>0$ is a tuning parameter. The \emph{convex conjugate} for $\mathcal{R}$ (see e.g. \cite{Rockafellar} for details) is given by:
\begin{equation*}
\mathcal{R}^*(B) := \sup_{A\in \mathbb{B}_{\mathcal{R}}(1)} \langle A, B \rangle.
\end{equation*}
For example if $\mathcal{R}(A) = \|A\|_{*}$, then $\mathcal{R}^*(B) = \|B\|_{s}$. Following \cite{Neg10}, for a subspace $\Theta$ of $\RR^{d_1\times \cdots\times d_N}$, define its compatibility constant $s(\Theta)$ as
\begin{equation*}
s(\Theta) := \sup_{A \in \Theta/\{0\}} \frac{\mathcal{R}^2(A)}{\|A\|_{\rm F}^2},
\end{equation*}
which can be interpreted as a notion of low-dimensionality of $\Theta$.

~\cite{RaskuttiYuan15} show that if $\widehat{T}$ is defined by~\eqref{EqnGeneral} and the regularizer $\calR(\cdot)$ is \emph{decomposable} with respect to $\Theta$, then if
\begin{equation}
\lambda \geq 2 w_G(\mathbb{B}_{\calR}(1)),
\end{equation}
where recall that $ w_G(\mathbb{B}_{\calR}(1)) = \mathbb{E}\big(\sup_{A \in \mathbb{B}_{\calR}(1)} \langle A, G\rangle\big)$. Then according to Theorem 1 in~\cite{RaskuttiYuan15}, 
\begin{equation}
\label{eq:risk}
\max\left\{\|\widehat{T}-T\|_n, \|\widehat{T}-T\|_{\rm F}\right\} \lesssim \frac{\sqrt{s(\Theta)} \lambda}{\sqrt{n}}.
\end{equation}
with probability at least $1 - \exp(-c n )$ for some constant $c > 0$. In particular setting $\lambda = 2 w_G(\mathbb{B}_{\calR}(1))$,
\begin{equation*}
\max\left\{\|\widehat{T}-T\|_n, \|\widehat{T}-T\|_{\rm F}\right\} \lesssim \frac{\sqrt{s(\Theta)} w_G(\mathbb{B}_{\calR}(1))}{\sqrt{n}}.
\end{equation*}
The upper bound boils down to bounding two quantities, $s(\Theta)$ and $w_G(\mathbb{B}_{\calR}(1))$, noting that for comparison pursposes the subpace $\Theta$ in the convex case refers to $\Theta(t_0)$ in the non-convex case. In the next section we provide a qualitative comparison between the upper bound for the non-convex PGD approach and the convex regularization approach. To be clear, ~\cite{RaskuttiYuan15} consider multi-response models where the response $Y^{(i)}$ can be a tensor which are not considered in this paper. 

The upper bound for the convex regularization scheme scales as ${\sqrt{s(\Theta(t_0))} w_G[\mathbb{B}_{\calR}(1)]}/{\sqrt{n}}$ while we recall that the upper bound we prove in this paper for the non-convex PGD approach scales as $w_G[\Theta(t_0) \cap \mathbb{B}_{\rm F}(1)]/{\sqrt{n}}$. Hence how the Frobenius error for the non-convex and convex approach scales depends on which of the quantities ${\sqrt{s(\Theta(t_0))} w_G[\mathbb{B}_{\calR}(1)]}/{\sqrt{n}}$ or $w_G[\Theta(t_0) \cap \mathbb{B}_{\rm F}(1)]/{\sqrt{n}}$ is larger. It follows easily that $w_G[\Theta(t_0) \cap \mathbb{B}_{\rm F}(1)] \leq \sqrt{s(\Theta(t_0))} w_G[\mathbb{B}_{\calR}(1)]$ since
\begin{eqnarray*}
w_G[\Theta(t_0) \cap \mathbb{B}_{\rm F}(1)]  & = & \mathbb{E}\big[\sup_{A \in \Theta(t_0), \|A\|_{\rm F} \leq 1} \langle A, G\rangle \big]\\
& \leq & \mathbb{E}\big[\sup_{\calR(A) \leq \sqrt{s(\Theta(t_0))}} \langle A, G\rangle \big] \\
& = & \sqrt{s(\Theta(t_0))}\mathbb{E}\big[\sup_{\calR(A) \leq 1} \langle A, G\rangle \big] = \sqrt{s(\Theta(t_0))} w_G[\mathbb{B}_{\calR}(1)].
\end{eqnarray*}
The first inequality follows from the subspace compatibility constant since for all $A \in \Theta(t_0) \cap  \mathbb{B}_{\rm F}(1)$, $\calR(A) \leq \sqrt{s(\Theta(t_0))}\|A\|_{\rm F} \leq \sqrt{s(\Theta(t_0))}$ and the final equality follows since $\calR(\cdot)$ is a convex function. Therefore the non-convex upper bound is always no larger than the convex upper bound and the important question is whether there is a gap between the convex and non-convex bounds which implies a superior bound in the non-convex case. For examples involving sparse vectors and low-rank matrices as studied in e.g., \cite{BuhlmannVDGBook,JainEtAl14,JainEtAl16}, these two quantities end up being identical up to a constant. On the other hand for tensors, as we see in this paper for $\Theta_3(r)$, the Gaussian width using the non-convex approach is smaller which presents an additional benefit for the non-convex projection approach.

In terms of implementation, the regularizer $\mathcal{R}(\cdot)$ needs to be defined in the convex approach and the important question is whether the convex regularizer is implementable for the low-dimensional structure of interest. For the non-convex approach, the important implementation issue is whether exact or approximate projection that satisfies the contractive projection property is implementable. These implementation issues have been resolved in the vector and matrix cases \citep[see, e.g.,][]{JainEtAl14,JainEtAl16}. In this paper, we focus on whether they apply in the low-rank tensor case under the low-dimensional structure $\Theta_1$, $\Theta_2$ and $\Theta_3$.

\section{Specific low tensor rank structure}

\label{SecExamples}

In this section, we apply Theorem~\ref{Theorem0} (and by extension Theorem~\ref{TheoremGLM}) to $\Theta_1(r)$, $\Theta_2(r,s)$ and $\Theta_3(r)$ and compare our theoretical result to the result achieved by the convex regularization approach. Recall that $\Theta_1(r)$ is isomorphic to rank-$r$ block diagonal matrices with diagonal blocks $A_{\cdot\cdot 1}$, $A_{\cdot\cdot 2}$,\ldots, $A_{\cdot\cdot d_3}$ so that its treatment is identical to the case of low rank matrix estimation. See \cite{JainEtAl16} for further discussions. We shall focus on $\Theta_2(r,s)$ and $\Theta_3(r)$ instead. To prove upper bounds using Theorem \ref{Theorem0} we find an exact or approximate projection $P_{\Theta(t)}$, prove the contractive projection property and then find an upper bound on the Gaussian width $w_G[\Theta(t) \cap \mathbb{B}_{\rm F}(1)]$.

\subsection{Low-rank structure for matrix slices}


Recall that
$$
\Theta_2(r, s ) = \left\{  A\in \RR^{d_1\times d_2 \times d_3}  :  \max_{j_3} \text{rank}(A_{\cdot \cdot j_3} )\leq r, \sum_{j_3=1}^{d_3} \mathbbm{1}(A_{\cdot \cdot j_3} \neq 0)\leq s \right\}.
$$
We define $P_{\Theta_2(r, s )}$ as a two-step projection:
\begin{enumerate}
\item[(1)] for each matrix slice $A_{\cdot \cdot j_3}$ where $1 \leq j_3 \leq d_3$, let $\tilde{A}_{\cdot \cdot j_3}$ be the best rank $r$ approximation of $A_{\cdot \cdot j_3}$; 
\item[(2)] to impose the sparsity condition, retain $s$ out of $d_3$ slices with the largest magnitude $\|\tilde{A}_{\cdot \cdot j_3}\|_{\rm F}$, and zero out all other slices.

\end{enumerate}
As discussed earlier both steps are easily computable using thresholding and SVD operators as discussed in \cite{JainEtAl14,JainEtAl16}.The following lemma proves that the contractive property of projection onto $\Theta_2(r,s)$ holds for our $P_{\Theta_2(r, s)}$.

\begin{lemma}
\label{ProjectionLemma1}
Let the projection operator $P_{\Theta_2(r, s)}$ be defined above. Suppose $Z \in \Theta_2(r_0,s_0) $, and $r_1<r_2< r_0, s_1<s_2< s_0$. Then for any $Y\in \Theta_2(r_1,s_1)$,  we have 
$$
\|P_{ \Theta_2(r_2,s_2)}(Z) - Z\|_{\rm F}  \leq  (\alpha +\beta +\alpha\beta)\cdot \|Y - Z\|_{\rm F}.
$$
where $\alpha = \sqrt{(s_0-s_2)/(s_0-s_1)}$, $\beta = \sqrt{(r_0-r_2)/(r_0-r_1)}$.
\end{lemma}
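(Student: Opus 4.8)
The plan is to realize $P_{\Theta_2(r_2,s_2)}$ as the composition $P_{\Theta_2(r_2,s_2)}=S_{s_2}\circ R_{r_2}$, where $R_{r_2}(A)$ replaces each matrix slice $A_{\cdot\cdot j_3}$ by its best rank-$r_2$ approximation $\bar P_{r_2}(A_{\cdot\cdot j_3})$, and $S_{s_2}(A)$ keeps the $s_2$ slices of $A$ of largest Frobenius norm and zeroes the rest. Setting $W:=R_{r_2}(Z)$ so that $P_{\Theta_2(r_2,s_2)}(Z)=S_{s_2}(W)$, the triangle inequality gives
$$\|P_{\Theta_2(r_2,s_2)}(Z)-Z\|_{\rm F}\le\|S_{s_2}(W)-W\|_{\rm F}+\|W-Z\|_{\rm F},$$
and I would bound the second term by $\beta\|Y-Z\|_{\rm F}$ and the first by $\alpha(1+\beta)\|Y-Z\|_{\rm F}$, which add up to the claimed $(\alpha+\beta+\alpha\beta)\|Y-Z\|_{\rm F}$.

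For the rank term, note that $\|W-Z\|_{\rm F}^2=\sum_{j_3}\|\bar P_{r_2}(Z_{\cdot\cdot j_3})-Z_{\cdot\cdot j_3}\|_{\rm F}^2$ decouples across slices. On slice $j_3$, $Z_{\cdot\cdot j_3}$ has rank $\le r_0$ with singular values $\sigma_1\ge\sigma_2\ge\cdots$, so $\|\bar P_{r_2}(Z_{\cdot\cdot j_3})-Z_{\cdot\cdot j_3}\|_{\rm F}^2=\sum_{i=r_2+1}^{r_0}\sigma_i^2$, while $Y_{\cdot\cdot j_3}$ has rank $\le r_1$, hence $\|Y_{\cdot\cdot j_3}-Z_{\cdot\cdot j_3}\|_{\rm F}^2\ge\sum_{i=r_1+1}^{r_0}\sigma_i^2$. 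The elementary inequality $\sum_{i=r_2+1}^{r_0}a_i\le\frac{r_0-r_2}{r_0-r_1}\sum_{i=r_1+1}^{r_0}a_i$ for a non-increasing nonnegative sequence $(a_i)$, applied with $a_i=\sigma_i^2$, yields the per-slice bound $\|\bar P_{r_2}(Z_{\cdot\cdot j_3})-Z_{\cdot\cdot j_3}\|_{\rm F}\le\beta\|Y_{\cdot\cdot j_3}-Z_{\cdot\cdot j_3}\|_{\rm F}$; summing the squares over $j_3$ gives $\|W-Z\|_{\rm F}\le\beta\|Y-Z\|_{\rm F}$. This is the same matrix contractive-projection argument already invoked for $\Theta_1(r)$ and in \cite{JainEtAl14,JainEtAl16}.

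For the sparsification term I would reduce to a sparse-vector projection: let $v\in\RR^{d_3}$ have entries $v_{j_3}=\|W_{\cdot\cdot j_3}\|_{\rm F}$ and $u\in\RR^{d_3}$ have entries $u_{j_3}=\|Y_{\cdot\cdot j_3}\|_{\rm F}$. Then $\|S_{s_2}(W)-W\|_{\rm F}=\|\tilde P_{s_2}(v)-v\|_{\ell_2}$, the support of $v$ is contained in that of the nonzero slices of $Z$ so $v$ is $s_0$-sparse, $u$ is $s_1$-sparse, and the slicewise reverse triangle inequality gives $\|u-v\|_{\ell_2}\le\|Y-W\|_{\rm F}$. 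Applying the vector contractive-projection property with constant $1$ (again via the same partial-sum inequality, now for the sorted squared entries of $v$) gives $\|\tilde P_{s_2}(v)-v\|_{\ell_2}\le\alpha\|u-v\|_{\ell_2}\le\alpha\|Y-W\|_{\rm F}$, and then $\|Y-W\|_{\rm F}\le\|Y-Z\|_{\rm F}+\|Z-W\|_{\rm F}\le(1+\beta)\|Y-Z\|_{\rm F}$ by the bound from the previous paragraph. Combining the two estimates finishes the argument.

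The main obstacle is organizational rather than technical: since the sparsification step acts on $W=R_{r_2}(Z)$ and not on $Z$, its natural reference point is $W$, which is exactly what forces the factor $1+\beta$ and hence the cross term $\alpha\beta$; choosing the order of the two reductions and threading $W$ correctly through both is the crux. Beyond that one only needs to dispatch degenerate cases — a slice of $Z$ already of rank $\le r_1$, or $Z$ already having $\le s_1$ nonzero slices — in which the relevant truncation error vanishes and the inequalities are trivial, and to observe that the ratio $\frac{s_0'-s_2}{s_0'-s_1}$ is increasing in $s_0'$, so replacing the true support size $s_0'\le s_0$ of $v$ by $s_0$ only weakens the bound.
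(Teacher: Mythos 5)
Your proposal is correct and follows essentially the same route as the paper's proof: the same two-stage decomposition (slicewise rank truncation $\tilde Z = W$, then slice sparsification), the same triangle-inequality splitting, the bound $\|W-Z\|_{\rm F}\le\beta\|Y-Z\|_{\rm F}$ via the matrix contraction property, and the reduction of the sparsification error to the vector hard-thresholding contraction applied to the vector of slice norms, yielding $\alpha(1+\beta)\|Y-Z\|_{\rm F}$. You in fact supply slightly more detail than the paper (the explicit partial-sum inequality behind the contraction bounds, and the reverse triangle inequality giving $\|v_Y-v_{\tilde Z}\|_{\ell_2}\le\|Y-\tilde Z\|_{\rm F}$, which the paper uses implicitly).
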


Consequently we have the following Theorem:
\begin{theorem}
\label{Theorem1}
Let $\{X^{(i)}, Y^{(i)}\}_{i=1}^n$ follow a Gaussian linear model as defined by~\eqref{EqnLinMod} with $T \in \Theta_2(r,s)$ and
$$
n \geq c_1\cdot sr(d_1+d_2 +\log d_3)
$$ 
for some constant $c_1>0$. Then, applying the PGD algorithm with step size $\eta = (\tau c_u)^{-2}$ and projection $P_{\Theta(r',s')}$ where
$$s' = \lceil 36 \tau^8\kappa^4 s \rceil,\qquad {\rm and}\qquad r' = \lceil 36\tau^8 \kappa^4 r \rceil, $$
guarantees that, with probability at least $1- Kc_2 \exp\{-c_3 \max(d_1,d_2,\log d_3)\}$, after $K \ge 2\tau^4\kappa^2 \log({\|T\|}_{\rm F}/\epsilon)$ iterations,
$$\|\widehat T_K - T\|_{\rm F} \leq  c_4\sigma\sqrt{\frac{sr\max\{d_1,d_2,\log(d_3)\}}{n}} + \epsilon $$
for any $\tau>1$, and some constants $c_2,c_3,c_4>0$.
\end{theorem}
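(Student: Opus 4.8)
The plan is to instantiate Theorem~\ref{Theorem0} with the collection $\{\Theta_2(r,s)\}$ indexed by $t=(r,s)$, so the work splits into (i) checking the structural hypotheses on the family and its projection, (ii) bounding the localized Gaussian width appearing in the conclusion of Theorem~\ref{Theorem0}, and (iii) bookkeeping to match the constants.

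For (i): symmetry of each cone $\Theta_2(r,s)$ and the partial ordering $\Theta_2(r_1,s_1)\subset\Theta_2(r_2,s_2)$ whenever $(r_1,s_1)\le(r_2,s_2)$ are immediate, and superadditivity holds because if $A$ has at most $s_1$ nonzero slices each of rank $\le r_1$ and $B$ has at most $s_2$ nonzero slices each of rank $\le r_2$, then $A+B$ has at most $s_1+s_2$ nonzero slices each of rank $\le r_1+r_2$. For the contractive projection property, Lemma~\ref{ProjectionLemma1} gives $\|P_{\Theta_2(r_2,s_2)}(Z)-Z\|_{\rm F}\le(\alpha+\beta+\alpha\beta)\|Y-Z\|_{\rm F}$ with $\alpha=\sqrt{(s_0-s_2)/(s_0-s_1)}\in(0,1)$ and $\beta=\sqrt{(r_0-r_2)/(r_0-r_1)}\in(0,1)$; since then $\alpha+\beta+\alpha\beta\le 3\max(\alpha,\beta)$ and $\max(\alpha,\beta)=\|(t_0-t_2)/(t_0-t_1)\|_{\ell_\infty}^{1/2}$ (the ratio taken componentwise), the family obeys $\mathrm{CPP}(3)$. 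With $\delta=3$ one has $4\delta^2=36$, which is exactly the factor in $r'=\lceil36\tau^8\kappa^4 r\rceil$ and $s'=\lceil36\tau^8\kappa^4 s\rceil$: choosing $t_0-t_1=(r,s)$ so that $T\in\Theta_2(r,s)\subset\Theta(t_0-t_1)$ forces $t_1=(r',s')$ and $t_0=((1+36\tau^8\kappa^4)r,(1+36\tau^8\kappa^4)s)$ up to the rounding in the ceilings. I would also record that the least-squares loss \eqref{EqnLeastSquares} satisfies $RSCS$ on $\Theta(t_0)$ with $C_l\asymp\tau^{-1}c_l$ and $C_u\asymp\tau c_u$ via Lemma~\ref{RestrictedEigenvalueLemma}.

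Step (ii) carries the real content and I expect it to be the main obstacle. Writing $(r_0,s_0)=t_0$, I would bound $w_G[\Theta_2(r_0,s_0)\cap\mathbb{B}_{\rm F}(1)]$ by peeling off the slice-sparsity with a union bound over the $\binom{d_3}{s_0}$ possible supports $S\subset\{1,\dots,d_3\}$ of the nonzero slices. For a fixed $S$, stacking the selected slices realizes the admissible tensors as block-diagonal matrices with $s_0$ blocks of size $d_1\times d_2$, each of rank $\le r_0$, intersected with the Frobenius ball, whose Gaussian width is $O(\sqrt{s_0 r_0(d_1+d_2)})$ by the standard blockwise low-rank estimate. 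Because $\langle A,G\rangle\sim\mathcal N(0,\|A\|_{\rm F}^2)$ is $1$-subgaussian for each fixed $A$ in the unit ball, the maximal inequality over the union of these $\binom{d_3}{s_0}$ sets contributes an extra $O\!\big(\sqrt{\log\binom{d_3}{s_0}}\big)=O(\sqrt{s_0\log(ed_3/s_0)})$, so
$$
w_G[\Theta_2(r_0,s_0)\cap\mathbb{B}_{\rm F}(1)]\;\lesssim\;\sqrt{s_0 r_0(d_1+d_2)}+\sqrt{s_0\log d_3}\;\lesssim\;\sqrt{s_0 r_0\max\{d_1,d_2,\log d_3\}}.
$$
Getting the $\log d_3$ (rather than $d_3$) dependence out of the union bound is the one place demanding genuine care.

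Finally, for (iii): since $r_0\asymp\tau^8\kappa^4 r$ and $s_0\asymp\tau^8\kappa^4 s$ with $\tau,\kappa$ treated as constants, the sample-size hypothesis $n>c_1 w_G^2[\Theta(t_0)\cap\mathbb{B}_{\rm F}(1)]$ of Theorem~\ref{Theorem0} reduces to $n\gtrsim sr(d_1+d_2+\log d_3)$; the error bound $\frac{8\eta\tau^4\kappa^2 c_u\sigma}{\sqrt n}\,w_G[\Theta(t_0)\cap\mathbb{B}_{\rm F}(1)]+\epsilon$ becomes $c_4\sigma\sqrt{sr\max\{d_1,d_2,\log d_3\}/n}+\epsilon$ after absorbing $\tau,\kappa,c_u,\eta$ into $c_4$; the failure probability $Kc_2\exp\{-c_3 w_G^2[\Theta(t_0)\cap\mathbb{B}_{\rm F}(1)]\}$ is at most $Kc_2\exp\{-c_3'\max\{d_1,d_2,\log d_3\}\}$ because $w_G^2[\Theta(t_0)\cap\mathbb{B}_{\rm F}(1)]\gtrsim\max\{d_1,d_2,\log d_3\}$; and the iteration requirement $K\ge 2\tau^4\kappa^2\log(\|T\|_{\rm F}/\epsilon)$ is inherited verbatim, which completes the proof.
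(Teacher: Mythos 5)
Your proposal is correct and its overall skeleton matches the paper's: verify that $\{\Theta_2(r,s)\}$ is a superadditive, partially ordered family of symmetric cones, use Lemma~\ref{ProjectionLemma1} together with $\alpha+\beta+\alpha\beta\le 3\max(\alpha,\beta)$ to get $\mathrm{CPP}(3)$ (whence the factor $4\delta^2=36$ in $r',s'$), set $t_1=(r',s')$ and $t_0=(r'+r,s'+s)$, and invoke Theorem~\ref{Theorem0}. Where you genuinely diverge is the Gaussian-width step. The paper observes that for any $A\in\Theta_2(r_0,s_0)\cap\mathbb{B}_{\rm F}(1)$ one has $\mathcal{R}_1(A)=\sum_{j_3}\|A_{\cdot\cdot j_3}\|_*\le\sqrt{r_0s_0}$, so that $\Theta_2(r_0,s_0)\cap\mathbb{B}_{\rm F}(1)\subset\mathbb{B}_{\mathcal{R}_1}(\sqrt{r_0s_0})$, and then cites Lemma~5 of \cite{RaskuttiYuan15} to get $w_G[\mathbb{B}_{\mathcal{R}_1}(1)]\le\sqrt{6(d_1+d_2+\log d_3)}$ — a two-line argument that also ties directly into the paper's convex-versus-non-convex comparison, since it shows the localized width is controlled by $\sqrt{s(\Theta)}\,w_G[\mathbb{B}_{\mathcal{R}_1}(1)]$. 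You instead bound the width directly: a union over the $\binom{d_3}{s_0}$ slice supports, a blockwise low-rank estimate $O(\sqrt{s_0r_0(d_1+d_2)})$ for each fixed support, and a Borell--TIS/maximal-inequality step contributing $O(\sqrt{s_0\log(ed_3/s_0)})$. This is valid (the fixed-support bound follows, e.g., from $\sum_j\langle M_j,G_j\rangle\le\max_j\|G_j\|_{\mathrm{op}}\cdot\sqrt{r_0}\sum_j\|M_j\|_{\rm F}\le\sqrt{r_0s_0}\max_j\|G_j\|_{\mathrm{op}}$, and the concentration step works because each piece sits in the unit Frobenius ball), and it is self-contained rather than resting on the external lemma; it even yields the marginally sharper intermediate form $\sqrt{s_0r_0(d_1+d_2)}+\sqrt{s_0\log d_3}$, though the final rate $\sqrt{s_0r_0\max\{d_1,d_2,\log d_3\}}$ is identical. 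The remaining bookkeeping (sample-size reduction, absorbing $\tau,\kappa,c_u,\eta$ into $c_4$, and lower-bounding $w_G^2$ by $\max\{d_1,d_2,\log d_3\}$ for the failure probability) agrees with the paper.
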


The convex regularization approach defined by~\cite{RaskuttiYuan15} is not directly applicable for $\Theta_2(r, s)$ since there is no suitable choice of regularizer that imposes both low-rankness of each slice and sparsity. Therefore we discuss the convex regularization approach applied to the parameter space $\Theta_1(r)$ for which a natural choice of regularizer is:
\begin{equation*}
\mathcal{R}_1(A) = \sum_{j_3 = 1}^{d_3} \| A_{\cdot \cdot j_3} \|_{\ast},
\end{equation*}
where $\|\cdot\|_{\ast}$ refers to the standard nuclear norm of a matrix. Let $\widehat{T}$ be an estimator corresponding to the minimizer of the regularized least-squares estimator defined by~\eqref{EqnGeneral} with regularizer $\mathcal{R}_1(A)$. Lemma 6 in ~\cite{RaskuttiYuan15} proves that
\begin{equation*}
\|\widehat{T} - T\|_{\rm F} \lesssim \sqrt{\frac{{r \max(d_1, d_2, \log d_3)}}{{n}}}.
\end{equation*}

Notice that both $\Theta_1(r)$ and $\Theta_2(r, s ) $ focus on the low-rankness of matrix slices of a tensor, and actually $\Theta_1(\cdot)$ can be seen as relaxation of  $\Theta_2(\cdot,\cdot )  $ since $\Theta_2(s,r) \subset \Theta_1(sr)$. Theorem \ref{Theorem1} guarantees that, under the restriction of sparse slices of low-rank matrices, PGD achieves the linear convergence rate with the statistical error of order
$$\sqrt{sr\max\{d_1,d_2,\log(d_3)\}\over n}.$$
If we compare this result with the risk bound of the convex regularization approach where the true tensor parameter lies in $\Theta_1(r)$ we see that replacing $r$ by $sr$ yields the same rate which makes some intuitive sense in light of the observation that $\Theta_2(s,r) \subset \Theta_1(sr)$. 

\subsection{Low Tucker ranks}

We now consider the general set of tensors with low Tucker rank:
$$
\Theta_3 (r) = \left\{A\in \RR^{d_1\times d_2 \times d_3}  :  \max\{r_1(A), r_2(A),r_3(A) \}\leq r  \right\}.
$$
Although we focus on $N=3$, note that $\Theta_3 (r)$ can be easily extended to general and $N$ and we also consider $N=4$ in the simulations.

To define the projection $P_{\Theta_3(r)}$ on to $\Theta_3(r)$, we exploit the connection between Tucker ranks and ranks of different matricizations mentioned earlier. Recall that the matricization operator $\mathcal{M}_j$ maps a tensor to a matrix and the inverse operator $\mathcal{M}_j^{-1}$ maps a matrix back to a tensor. Let $\bar{P}_r(M)$ be the low-rank projection operator that maps a matrix $M$ to its best rank $r$ approximation. Then we can define the approximate projection $\widehat P_{\Theta_3(r)}$ as follows:
\begin{eqnarray}
\label{TuckerProj}
\widehat P_{\Theta_3(r)}(A) := (\mathcal{M}_3^{-1}\circ \bar{P}_r\circ\mathcal{M}_3)\circ(\mathcal{M}_2^{-1}\circ\bar{P}_r\circ\mathcal{M}_2)\circ(\mathcal{M}_1^{-1}\circ\bar{P}_r\circ\mathcal{M}_1)(A).
\end{eqnarray}
The order of which matricization is performed is nonessential. Similar to before, we have the following projection lemma to be essential in the analysis of PGD applied to the restricted parameter space $\Theta_3$.

\begin{lemma}\label{ProjectionLemma2}
Suppose $Z \in \Theta_3(r)$, and $r_1<r_2 <r_0$. Then for any $Y\in \Theta_3(r_1)$,  we have
$$
\|\widehat P_{ \Theta_3(r_2)}(Z) - Z\|_{\rm F}  \leq (3\beta +3\beta^2 + \beta^3) \|Y - Z\|_{\rm F}
$$
where
$\beta = \sqrt{(r_0-r_2)/(r_0-r_1)}$.
\end{lemma}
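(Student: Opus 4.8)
The plan is to reduce everything to the matrix contractive projection property and then telescope across the three mode-wise truncations that define $\widehat P_{\Theta_3(r_2)}$. The one matrix fact I need is the following, which is exactly the single-slice instance of the already-noted statement that $\{\Theta_1(r)\}$ with $\{P_{\Theta_1(r)}\}$ obeys $\mathrm{CPP}(1)$: if $M$ is a matrix with $\mathrm{rank}(M)\le r_0$ and $N$ is any matrix with $\mathrm{rank}(N)\le r_1$ for $r_1<r_2<r_0$, then $\|\bar P_{r_2}(M)-M\|_{\rm F}\le \beta\,\|M-N\|_{\rm F}$ with $\beta=\sqrt{(r_0-r_2)/(r_0-r_1)}$. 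This follows from $\|\bar P_{r_2}(M)-M\|_{\rm F}^2=\sum_{i=r_2+1}^{r_0}\sigma_i(M)^2$, the Eckart--Young bound $\|M-N\|_{\rm F}^2\ge\sum_{i=r_1+1}^{r_0}\sigma_i(M)^2$, and the fact that, the $\sigma_i(M)$ being nonincreasing, the average of $\sigma_{r_1+1}^2,\dots,\sigma_{r_2}^2$ (there are $r_2-r_1$ of them) dominates the average of $\sigma_{r_2+1}^2,\dots,\sigma_{r_0}^2$ (there are $r_0-r_2$ of them); rearranging gives $\sum_{i=r_2+1}^{r_0}\sigma_i^2\le\beta^2\sum_{i=r_1+1}^{r_0}\sigma_i^2$.

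\emph{Stability of the other mode ranks.} The second ingredient is to check that a mode-$j$ rank-$r_2$ truncation does not inflate the ranks in the other modes. Indeed $\bar P_{r_2}(\mathcal M_j(A))$ is $\mathcal M_j(A)$ left-multiplied by the orthogonal projection onto its top-$r_2$ left singular subspace, so $\mathcal M_j^{-1}\circ\bar P_{r_2}\circ\mathcal M_j$ acts on $A$ by an orthogonal projection along mode $j$; for any $k\neq j$ this right-multiplies $\mathcal M_k(A)$ by a fixed matrix, hence $r_k$ cannot increase. Writing $Z^{(0)}=Z\in\Theta_3(r_0)$, $Z^{(1)}=\mathcal M_1^{-1}\bar P_{r_2}\mathcal M_1(Z^{(0)})$, $Z^{(2)}=\mathcal M_2^{-1}\bar P_{r_2}\mathcal M_2(Z^{(1)})$, and $Z^{(3)}=\mathcal M_3^{-1}\bar P_{r_2}\mathcal M_3(Z^{(2)})=\widehat P_{\Theta_3(r_2)}(Z)$, this gives $\mathrm{rank}(\mathcal M_2(Z^{(1)}))\le\mathrm{rank}(\mathcal M_2(Z))\le r_0$ and $\mathrm{rank}(\mathcal M_3(Z^{(2)}))\le\mathrm{rank}(\mathcal M_3(Z))\le r_0$, so the matrix bound of the first paragraph is applicable at each of the three stages, with $N$ taken to be the corresponding matricization of $Y$ (whose rank is $\le r_1$ in every mode since $Y\in\Theta_3(r_1)$), using also that $\mathcal M_k$ is an isometry so $\|\mathcal M_k(Y)-\mathcal M_k(W)\|_{\rm F}=\|Y-W\|_{\rm F}$.

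\emph{Telescoping.} By the triangle inequality $\|Z^{(3)}-Z\|_{\rm F}\le \|Z^{(1)}-Z^{(0)}\|_{\rm F}+\|Z^{(2)}-Z^{(1)}\|_{\rm F}+\|Z^{(3)}-Z^{(2)}\|_{\rm F}$. The matrix bound in mode $1$ gives $\|Z^{(1)}-Z\|_{\rm F}\le\beta\|Y-Z\|_{\rm F}$. In mode $2$ it gives $\|Z^{(2)}-Z^{(1)}\|_{\rm F}\le\beta\|Y-Z^{(1)}\|_{\rm F}\le\beta(\|Y-Z\|_{\rm F}+\|Z-Z^{(1)}\|_{\rm F})\le(\beta+\beta^2)\|Y-Z\|_{\rm F}$. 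In mode $3$ it gives $\|Z^{(3)}-Z^{(2)}\|_{\rm F}\le\beta\|Y-Z^{(2)}\|_{\rm F}\le\beta\big(\|Y-Z\|_{\rm F}+\|Z-Z^{(1)}\|_{\rm F}+\|Z^{(1)}-Z^{(2)}\|_{\rm F}\big)\le\beta\big(1+\beta+(\beta+\beta^2)\big)\|Y-Z\|_{\rm F}=(\beta+2\beta^2+\beta^3)\|Y-Z\|_{\rm F}$. Adding the three contributions yields $(3\beta+3\beta^2+\beta^3)\|Y-Z\|_{\rm F}$, which is the claimed bound.

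\emph{Main obstacle.} The only step that requires genuine care is the rank-stability claim of the second paragraph: one must be sure that truncating one matricization leaves the ranks of the other two matricizations $\le r_0$, so that all intermediate tensors $Z^{(1)},Z^{(2)}$ stay in $\Theta_3(r_0)$ (in the relevant mode) and the matrix $\mathrm{CPP}(1)$ inequality may legitimately be invoked at each of the three stages. Once that is in place, the remainder is Eckart--Young plus bookkeeping with the triangle inequality, and the precise coefficient $3\beta+3\beta^2+\beta^3$ falls out of the telescoping.
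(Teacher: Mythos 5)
Your proof is correct and follows essentially the same route as the paper's: the same three successive mode-wise truncations $Z^{(1)},Z^{(2)},Z^{(3)}$, the same per-mode matrix contraction bound, and the same triangle-inequality telescoping yielding $\beta+(\beta+\beta^2)+(\beta+2\beta^2+\beta^3)=3\beta+3\beta^2+\beta^3$. The only differences are that you prove the matrix contraction inequality from Eckart--Young rather than citing it, and you explicitly verify that a mode-$j$ truncation does not inflate the other mode ranks of the intermediate tensors --- a point the paper's proof uses implicitly --- both of which are correct and make the argument more self-contained.
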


This allows us to derive the following result for the PGD algorithm applied with projection operator $\widehat P_{ \Theta_3(r')}(\cdot)$.
\begin{theorem}
\label{Theorem2}
Let $\{X^{(i)}, Y^{(i)}\}_{i=1}^n$ follow a Gaussian linear model as defined by~\eqref{EqnLinMod} with $T \in \Theta_3(r)$ and
$$
n \geq c_1\cdot  r\cdot \min\{d_1 + d_2d_3, d_2 + d_1d_3, d_3+d_1d_2\},
$$ 
for some constant $c_1>0$. Then, applying the PGD algorithm with step size $\eta = (\tau c_u)^{-2}$ and projection $\widehat P_{\Theta_3(r')}$ where
$$r' =\lceil 196\tau^8 \kappa^4 r \rceil,$$
guarantees that, with probability at least $1- Kc_2 \exp\{-c_3 \min(d_1 + d_2d_3, d_2 + d_1d_3, d_3+d_1d_2)\}$, after $K \ge 2\tau^4\kappa^2 \log({\|T\|}_{\rm F}/\epsilon)$ iterations,
$$\|\widehat{T}_K - T\|_{\rm F} \leq  c_4\sigma \sqrt{\frac{r\cdot \min\{ d_1+d_2d_3, d_2 + d_1d_3, d_3+ d_1d_2\}}{n}} + \epsilon $$
for any $\tau>1$, and some constants $c_2,c_3,c_4>0$.
\end{theorem}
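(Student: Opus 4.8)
The plan is to obtain Theorem~\ref{Theorem2} as a direct specialization of the general Gaussian linear model bound in Theorem~\ref{Theorem0}, taken with $\Theta(t) = \Theta_3(t)$ and with the approximate projections $\widehat P_{\Theta_3(t)}$ of \eqref{TuckerProj}. To invoke Theorem~\ref{Theorem0} I need to supply three things: (i) that $\{\Theta_3(t): t\ge 0\}$ is a superadditive, partially ordered collection of symmetric cones; (ii) that the operators $\{\widehat P_{\Theta_3(t)}\}$ satisfy $\mathrm{CPP}(\delta)$ for an explicit constant $\delta$; and (iii) an upper bound on the localized Gaussian width $w_G[\Theta_3(t_0)\cap\mathbb{B}_{\rm F}(1)]$. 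Given these, the statement follows by matching the constants in the template of Theorem~\ref{Theorem0}.

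Ingredient (i) is elementary: multiplying a tensor by a scalar leaves $r_1,r_2,r_3$ unchanged, so each $\Theta_3(t)$ is a symmetric cone; $\Theta_3(t_1)\subseteq\Theta_3(t_2)$ whenever $t_1\le t_2$ holds by definition; and superadditivity $\Theta_3(t_1)+\Theta_3(t_2)\subseteq\Theta_3(t_1+t_2)$ follows from $r_m(A+B)=\mathrm{rank}(\mathcal{M}_m(A)+\mathcal{M}_m(B))\le r_m(A)+r_m(B)$ for $m=1,2,3$, i.e.\ subadditivity of matrix rank applied mode-by-mode. Ingredient (ii) is exactly Lemma~\ref{ProjectionLemma2}: it gives $\|\widehat P_{\Theta_3(r_2)}(Z)-Z\|_{\rm F}\le(3\beta+3\beta^2+\beta^3)\|Y-Z\|_{\rm F}$ with $\beta=\sqrt{(r_0-r_2)/(r_0-r_1)}\le 1$, and since $3\beta+3\beta^2+\beta^3\le 7\beta$ and the index is scalar (so $\|(t_0-t_2)/(t_0-t_1)\|_{\ell_\infty}^{1/2}=\beta$), this is precisely $\mathrm{CPP}(7)$, i.e.\ $\delta=7$. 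Note that Theorem~\ref{Theorem0} only asks the operators to obey CPP, so working with the approximate projection $\widehat P_{\Theta_3}$ rather than a genuine metric projection onto $\Theta_3$ causes no difficulty.

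With $\delta=7$ we have $4\delta^2=196$, so the projection index prescribed by Theorem~\ref{Theorem0} is $t_1=\lceil \tfrac{196\tau^8\kappa^4}{1+196\tau^8\kappa^4}\,t_0\rceil$. Choosing $t_0$ so that $t_0-t_1=r$ (up to the ceiling) — legitimate because the true parameter satisfies $T\in\Theta_3(r)=\Theta_3(t_0-t_1)$ — forces $t_0=(1+196\tau^8\kappa^4)r$ and $r'=t_1=\lceil 196\tau^8\kappa^4 r\rceil$, which is the projection used in the statement. For ingredient (iii), observe $\Theta_3(t_0)\subseteq\{A:\mathrm{rank}(\mathcal{M}_m(A))\le t_0\}$ for each $m$, and that mode-$m$ matricization is a Frobenius isometry carrying an i.i.d.\ Gaussian tensor to a $d_m\times(\prod_{k\ne m}d_k)$ matrix with i.i.d.\ standard Gaussian entries; hence $w_G[\Theta_3(t_0)\cap\mathbb{B}_{\rm F}(1)]$ is at most the Gaussian width of the rank-$\le t_0$ matrices inside the unit Frobenius ball of $\mathbb{R}^{d_m\times\prod_{k\ne m}d_k}$, for every $m$. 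The latter is $\lesssim\sqrt{t_0(d_m+\prod_{k\ne m}d_k)}$ via $\langle A,G\rangle\le\|A\|_{*}\|G\|_{s}\le\sqrt{t_0}\,\|A\|_{\rm F}\,\|G\|_{s}$ together with $\mathbb{E}\|G\|_{s}\lesssim\sqrt{d_m}+\sqrt{\prod_{k\ne m}d_k}$ (equivalently, by an $\varepsilon$-net argument on the rank-$t_0$ matrix manifold). Taking the minimum over $m\in\{1,2,3\}$ and using $t_0\le 197\tau^8\kappa^4 r$ gives $w_G[\Theta_3(t_0)\cap\mathbb{B}_{\rm F}(1)]\lesssim\tau^4\kappa^2\sqrt{r\min\{d_1+d_2d_3,\,d_2+d_1d_3,\,d_3+d_1d_2\}}$. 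Substituting this into the conclusion of Theorem~\ref{Theorem0} with $\eta=(\tau c_u)^{-2}$ and absorbing the $\tau$, $\kappa$, $c_u$ dependence into the constants yields the stated Frobenius bound; the requirement $n>c_1 w_G^2[\Theta_3(t_0)\cap\mathbb{B}_{\rm F}(1)]$ becomes $n\ge c_1\,r\min\{d_1+d_2d_3,\,d_2+d_1d_3,\,d_3+d_1d_2\}$ after rescaling $c_1$, and the failure probability $\exp\{-c_3 w_G^2[\Theta_3(t_0)\cap\mathbb{B}_{\rm F}(1)]\}$ is controlled by $\exp\{-c_3\min\{d_1+d_2d_3,\ldots\}\}$ using the matching lower bound on the width obtained by embedding a rank-$t_0$ matrix into the smallest matricization.

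The routine part is the reduction just described; the substantive content lives in the two ingredients it consumes. Lemma~\ref{ProjectionLemma2} is the delicate one — its proof must control how the approximation error compounds through the composition of the three operators $\mathcal{M}_m^{-1}\circ\bar P_r\circ\mathcal{M}_m$, which is where the pattern $3\beta+3\beta^2+\beta^3=(1+\beta)^3-1$ originates. The Gaussian-width estimate is secondary but worth flagging: one deliberately exploits only a single matricization at a time, which already suffices for the stated rate (a sharper bound using all three matricizations simultaneously would approach the true parameter count $r^3+r(d_1+d_2+d_3)$ but is not needed here). With both ingredients in hand, it only remains to thread $\delta=7$ and $t_0=(1+196\tau^8\kappa^4)r$ through Theorem~\ref{Theorem0}.
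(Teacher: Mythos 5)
Your proposal is correct and follows essentially the same route as the paper: verify that $\{\Theta_3(t)\}$ is a superadditive collection of symmetric cones, invoke Lemma~\ref{ProjectionLemma2} to get $\mathrm{CPP}(7)$ (hence the factor $4\delta^2=196$ and $t_0=r'+r$), apply Theorem~\ref{Theorem0}, and bound $w_G[\Theta_3(r'+r)\cap\mathbb{B}_{\rm F}(1)]$ by passing to the nuclear-norm ball of a single matricization and taking the minimum over the three modes. The only cosmetic difference is that you bound the width by a direct duality/net argument where the paper cites Lemma 5 of \cite{RaskuttiYuan15}, and you explicitly flag the lower bound on the width needed to state the failure probability in terms of $\min\{d_1+d_2d_3,\ldots\}$, a step the paper leaves implicit.
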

In \cite{RaskuttiYuan15}, the following convex low-rankness regularizer is considered for the space $\Theta_3(r)$:
$$
\calR_2(A)={1\over 3}\left(\|\calM_1(A)\|_\ast+\|\calM_2(A)\|_\ast+\|\calM_3(A)\|_\ast.\right).
$$
Let $\widehat{T}$ be an estimator corresponding to the minimizer of the regularized least-squares estimator defined by~\eqref{EqnGeneral} with regularizer $\mathcal{R}_2(A)$. Lemma 10 in ~\cite{RaskuttiYuan15} proves that
\begin{equation*}
\|\widehat{T} - T\|_{\rm F} \lesssim \sqrt{\frac{{r\cdot \max(d_1+ d_2d_3, d_2 + d_1d_3, d_3 + d_1d_2)}}{{n}}}.
\end{equation*}
This shows that convex relaxation in this particular case has greater mean-squared error since the minimum is replaced by the maximum. The underlying reason is that the non-convex PGD approach selects the optimal choice of matricization whereas the convex regularization approach takes an average of the three matricizations which is sub-optimal. For instance if $d_1 = d^2$ and $d_2 = d_3 = d$, the convex regularization scheme achieves a rate of $n^{-1/2}r^{1/2}d^{3/2}$ whereas the non-convex approach achieves the much sharper rate of $n^{-1/2}r^{1/2}d$.

\section{Simulations}

\label{SecSimulations}

In this section, we provide a simulation study that firstly verifies that the non-convex PGD algorithm performs well in solving least-squares, logistic and Poisson regression problems and then compares the non-convex PGD approach with the convex regularization approach we discussed earlier. Our simulation study includes both third and fourth order tensors. For the purpose of illustration, we consider the balanced-dimension situation where $d = d_1 = d_2 =d_3 (= d_4)$, and hence the number of elements is $p=d^3$ for a third order tensor and $p=d^4$ for a fourth order tensor. 

\subsection{Data generation}
We first describe three different ways of generating random tensor coefficient $T$ with different types of low tensor rank structure.
\begin{enumerate}
\item (Low CP rank)
Generate three independent groups of $r$ independent random vectors of unit length, $\{u_{k,1}\}_{k=1}^r$, $\{u_{k,2}\}_{k=1}^r$ and $\{u_{k,3}\}_{k=1}^r$. To do this we perform the SVD of a Gaussian random matrix three times and keep the $r$ leading singular vectors, and then compute the outer-product 
$$
T =\sum_{k=1}^r u_{k,1} \otimes u_{k,2} \otimes u_{k,3}.
$$
The $T$ produced in this way is guaranteed to have CP rank at most $r$. This can easily be extended to $N=4$.
\item (Low Tucker rank)
Generate $M_{d\times d \times d}$ with i.i.d. $\mathcal{N}(0,1)$ elements and then do approximate Tucker rank-$r$ projection (successive low rank approximation of mode-1, mode-2 and mode-3 matricization) to get $T = \widehat P_{\Theta_3(r)}(M)$. The $T$ produced in this way is guaranteed to have largest element of Tucker rank at most $r$. Once again this is easily extended to the $N=4$ case.
\item (Sparse slices of low-rank matrices)
In this case $N = 3$. Generate $s$ slices of random rank-$r$ matrices, (with eigenvalues all equal to one and random eigenvectors), and fill up the remaining $d-s$ slices with zero matrices to get $d\times d \times d$ tensor $T$. The $T$ produced in this way is guaranteed to fall in $\Theta_2(r,s)$.
\end{enumerate}

Then we generate covariates $\{X^{(i)}\}_{i=1}^n$ to be i.i.d random matrices filled with i.i.d $\mathcal{N}(0,1)$ entries. 
Finally, we simulate three GLM model, the Gaussian linear model, logistic regression and Poisson regression as follows.

\begin{enumerate}
\item (Gaussian linear model) 
We simulated noise $\{\epsilon^{(i)}\}_{i=1}^n$ independently from $ \mathcal{N}(0,\sigma^2)$ and we vary $\sigma^2$.
The noisy observation is then
$$
Y^{(i)} = \langle X^{(i)} , T \rangle + \epsilon^{(i)}.
$$
\item (Logistic regression)
We simulated Binomial random variables:
$$
Y^{(i)} \sim \text{Binomial} (m, p_i) , \text{ where } p_i = \text{logit}( \alpha \cdot \langle X^{(i)}, T\rangle)
$$
\item (Poisson regression)
We simulated
$$
Y^{(i)} \sim \text{Poisson} (\lambda_i) , \text{ where } \lambda_i = m \exp( \alpha \cdot \langle X^{(i)}, T\rangle )
$$
\end{enumerate}

\subsection{Convergence of PGD under restricted tensor regression}
 
\subsubsection{Third order tensors}
The first set of simulations investigates the convergence performance of PGD under various constraints and step sizes for three different types of low-rankness. One of the important challenges when using the projected gradient descent algorithm is choosing the step-sizes (just like selecting the regularization parameter for convex regularization schemes) and the step-size choices stated in Theorem~\ref{TheoremGLM} depend on non-computable parameters (e.g. $c_u, c_{\ell},...$). In the first two cases (see cases below), PGD with approximate projection $\widehat P_{\Theta_3(r')}$ were applied with different choices of $(r', \eta)$  while in the third case the PGD with exact projection $ P_{\Theta_2(r', s')}$ were adopted with different choices of $(r', s', \eta)$.

\begin{enumerate}
\item[Case 1a:] (Gaussian) Low CP Rank with $p=50^3$, $n =4000$, $r=5$, $\sigma = 0.5$ (SNR = 4.5);
\item[Case 2a:] (Gaussian) Low Tucker Rank with $p=50^3$, $n=4000$, $r=5$, $\sigma = 5$ (SNR = 7.2);
\item[Case 3a:] (Gaussian) Slices of Low-rank Matrices with $p=50^3$, $n=4000$, $r=5$, $s= 5$, $\sigma = 1$ (SNR = 5.2). 
\end{enumerate}

Figures \ref{figure1}, \ref{figure2} and \ref{figure3} plot normalized rooted mean squared error (rmse) ${\|\widehat T -T\|_{\rm F}/ \|T\|_{\rm F}}$ versus number of iterations, showing how fast rmse decreases as the number of iterations increases, under different  $(r', \eta)$  or $(r', s', \eta)$. Notice that here we only show the simulation result for one typical run for each case 1, 2 and 3 since the simulation results are quite stable.

Overall, the plots show the convergence of rmse's, and that the larger the $r'$ or $s'$ is, the greater the converged rmse will be, meaning that misspecification of rank/sparsity will do harm to the performance of PGD. In terms of the choice of step size, the experiments inform us that if $\eta$ is too large, the algorithm may not converge and the range of tolerable step-size choices varies in different cases. In general, the more misspecified the constraint parameter(s) is(are), the lower the tolerance for step size will be. On the other hand, as we can see in all cases, given $\eta$ under a certain tolerance level, the larger the $\eta$ is, the faster the convergence will be.

\subsubsection{Fourth order tensors}
Although we have focused on third order tensor for brevity, our method applies straightforwardly to higher order tensors. For illustration, we considered the following two examples which focus on estimating fourth order low rank tensors.

\begin{enumerate}
\item[Case 4a:] (Gaussian) Low CP Rank with $p=20^4$, $n =4000$, $r=5$, $\sigma = 0.5$  (SNR = 4.4); 
\item[Case 5a:] (Gaussian) Low Tucker Rank with $p=20^4$, $n=4000$, $r=5$, $\sigma = 5$  (SNR = 7.4).
\end{enumerate}

Figure \ref{figure4} plots rmse vs number of iterations for Case 4a and Case 5a using $\eta = 0.2$ under various choices of low-rankness constraint parameter $r'$. In general the convergence behavior for Case 4a and Case 5a are similar to those for Case 1a and Case 2a.

\begin{figure} 
\centering
\includegraphics[width=0.8\linewidth]{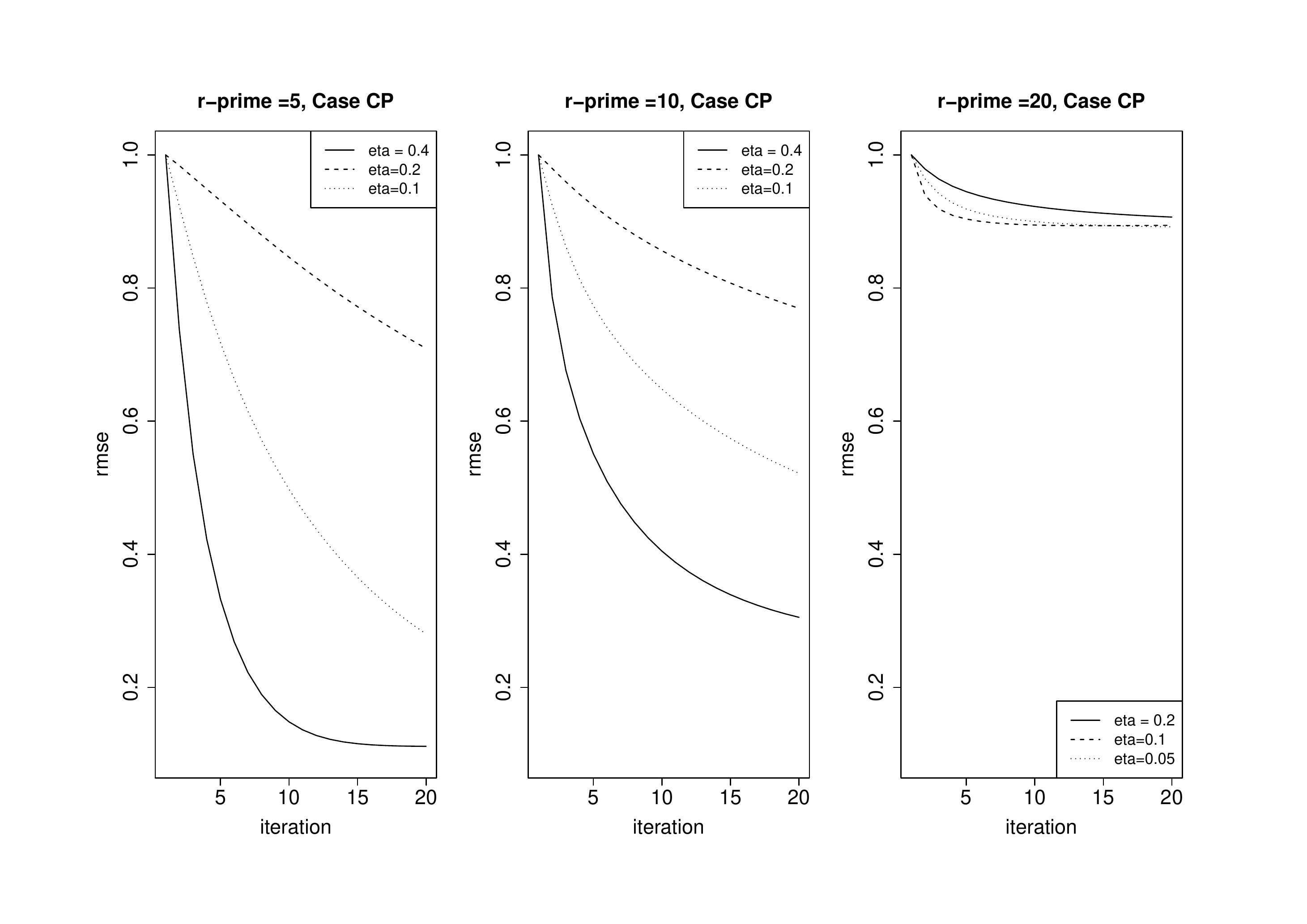}
\caption{Case 1a: Low CP rank}
\label{figure1}
\bigskip 
\includegraphics[width=0.8\linewidth]{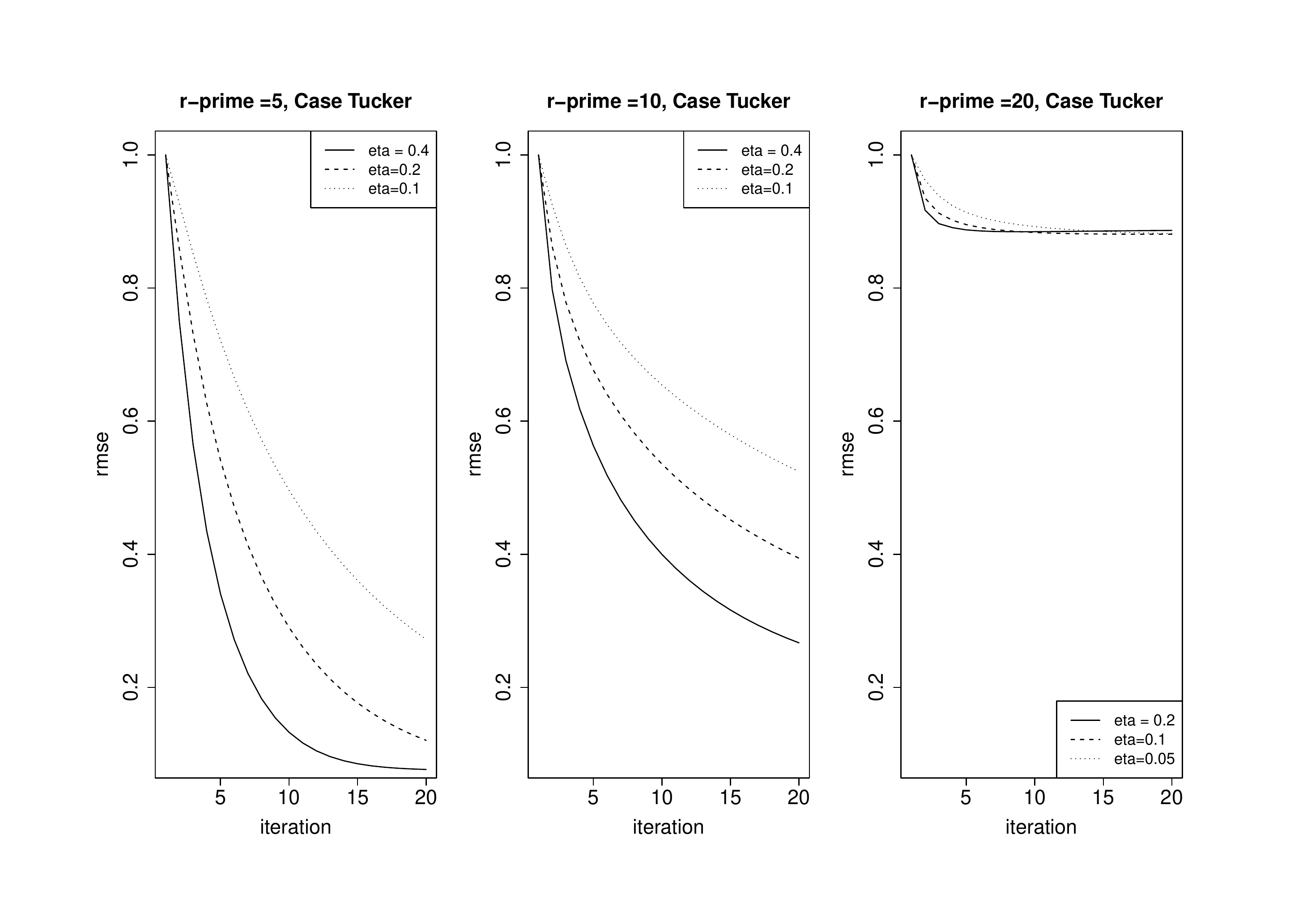}
\caption{Case 2a: Low Tucker rank}
\label{figure2}
\end{figure}

\begin{figure} 
\centering
\includegraphics[width=1.0\linewidth]{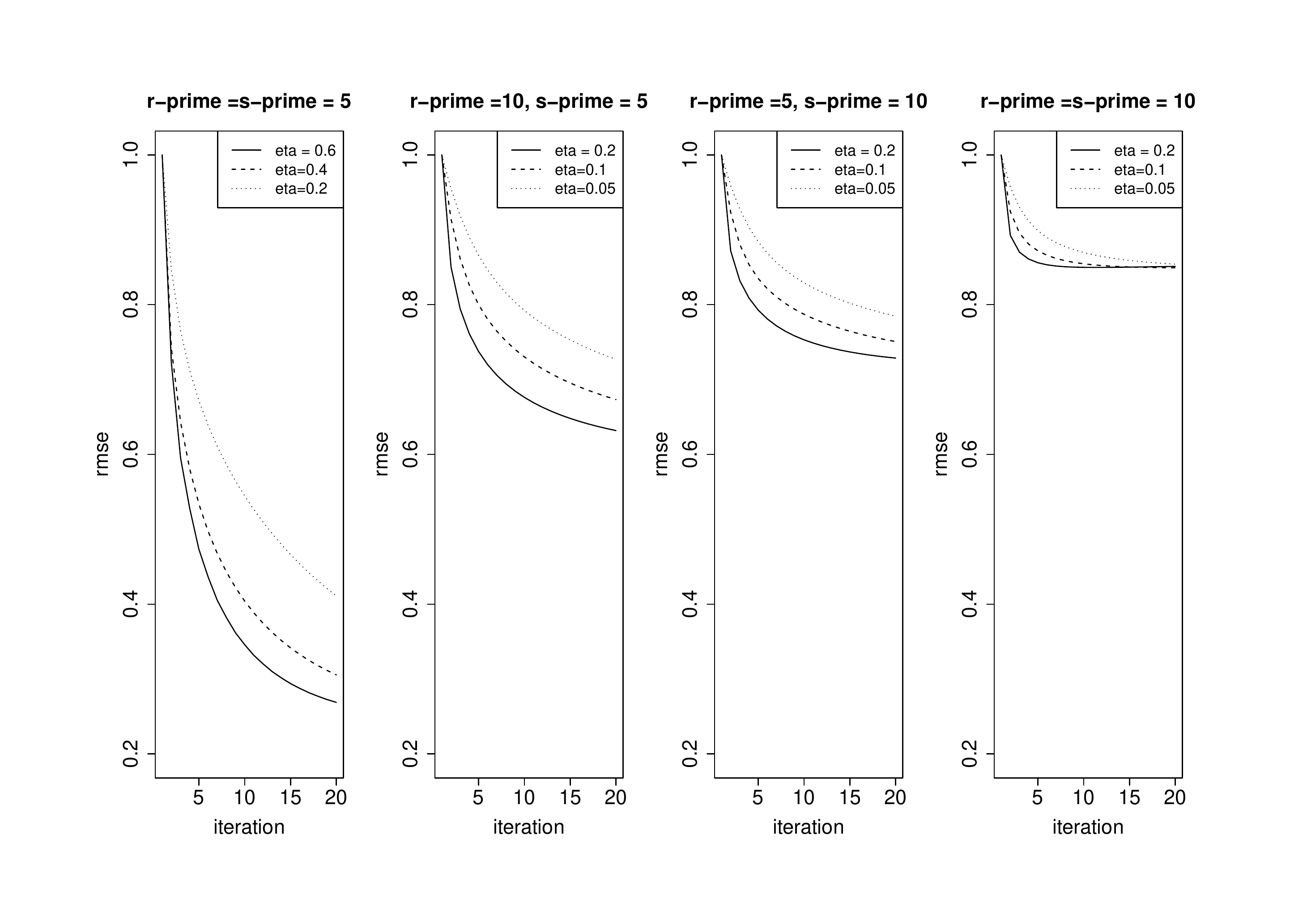}
\caption{Case 3a: Sparse slices of low-rank matrices}
\label{figure3}
\includegraphics[width=0.65\linewidth]{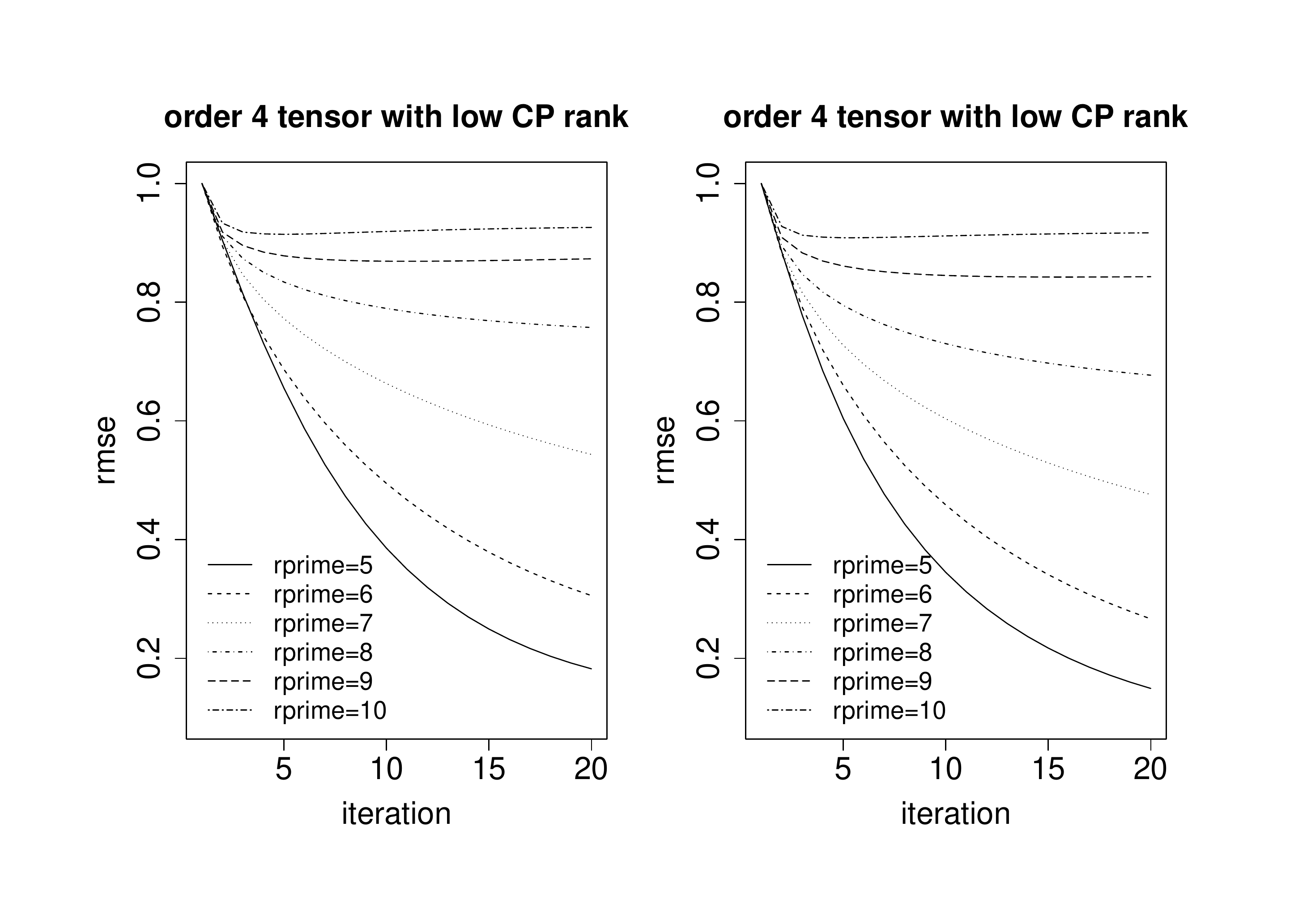}
\caption{Case 4a, 5a: 4th order tensor }
\label{figure4}
\end{figure}

\begin{figure} 
\centering
\includegraphics[width=0.8\linewidth]{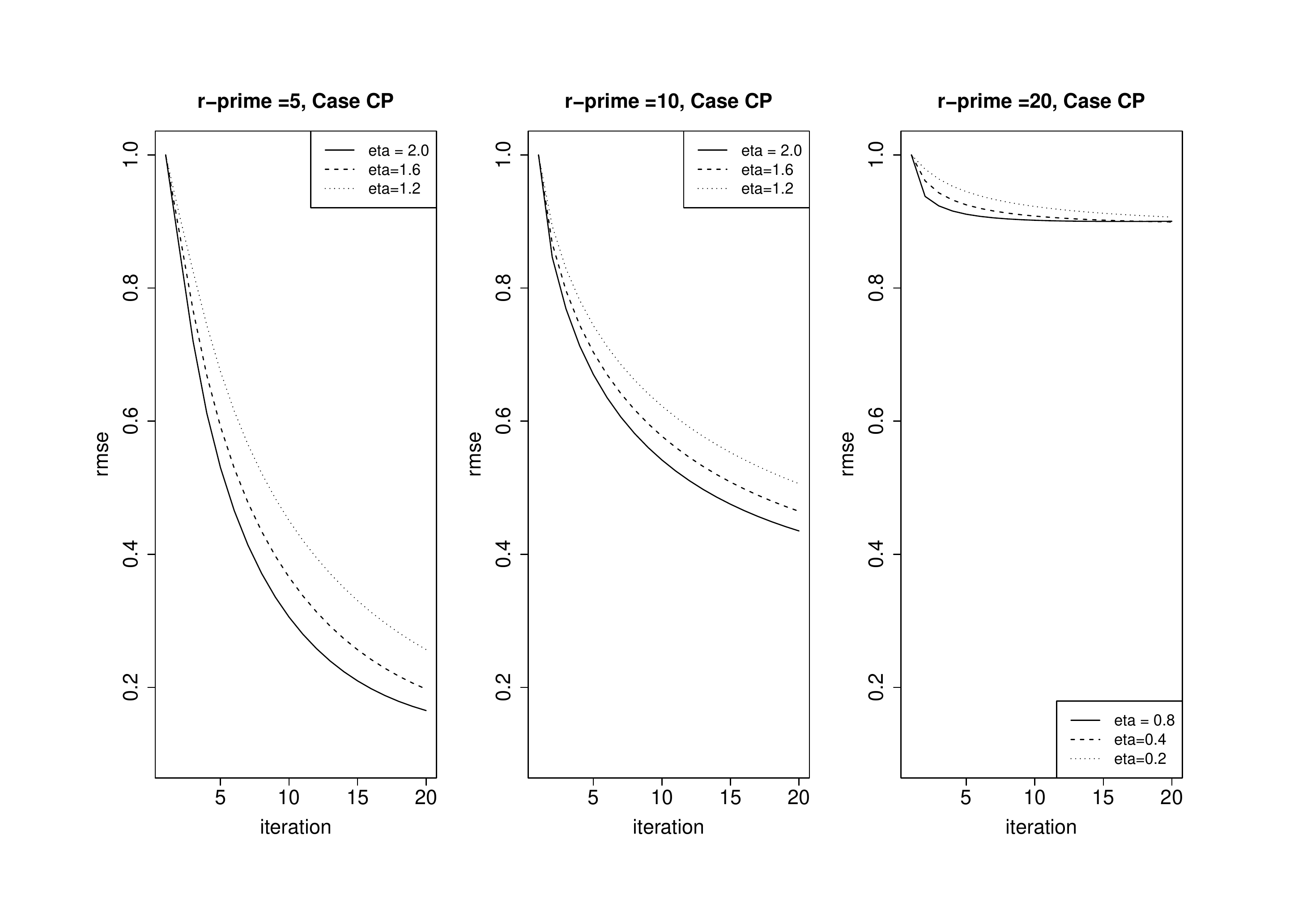}
\caption{Case 1b: (Logistic) Low CP rank}
\label{figure5}
\bigskip 
\includegraphics[width=0.8\linewidth]{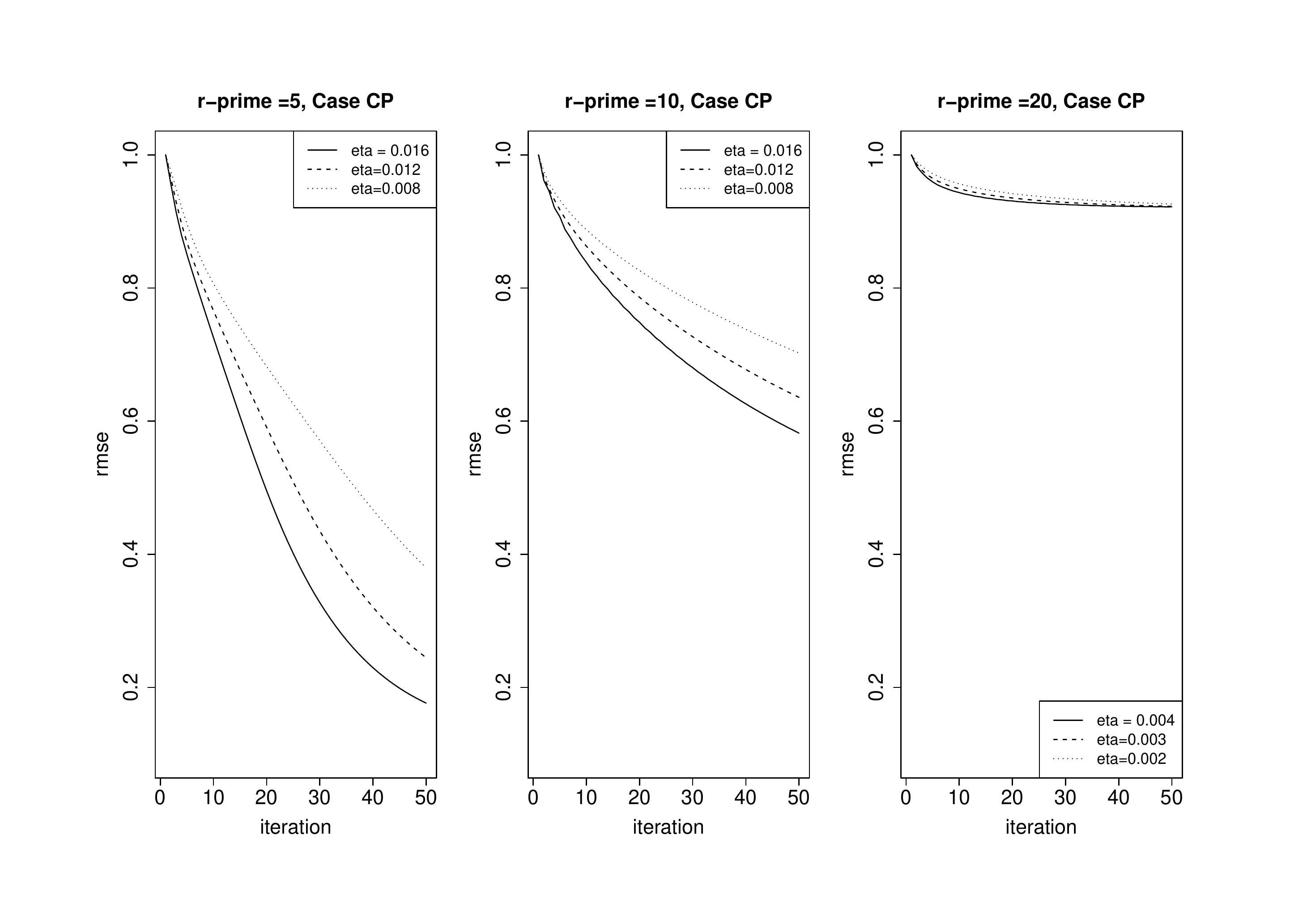}
\caption{Case 1c: (Poisson) Low CP rank }
\label{figure6}
\end{figure}

\subsubsection{Logistic and Poisson regression}
In the next set of simulations, we study the convergence behavior of the PGD applied to logistic and Poisson regression situation. 

\begin{enumerate}
\item[Case 1b:] (Logistic) Low CP Rank with $p=50^3$, $n =4000$, $r=5$, $m=22$, $\alpha = 1$ (SNR = 4.1);
\item[Case 1c:] (Poisson) Low CP Rank with $p=50^3$, $n =4000$, $r=5$, $m=10$, $\alpha = 0.5$ (SNR =  6.5).
\end{enumerate}

The results presented in Figures \ref{figure5} and \ref{figure6} exhibit similar pattern of convergence as in Figure \ref{figure1}. We observe also that in the case of low Tucker rank and sparse slices of low-rank matrices, logistic and Poisson regression have similar convergence behavior to least-squares regression. In general, a relaxed projection step is inferior to using the true rank parameter for projection. Once again as the step-size increases, the convergence will speed up until the step size becomes too large to guarantee convergence.

\subsection{Comparison of non-convex PGD to convex regularization}

In our final set of simulation studies, we compare the PGD method with convex regularization methods (implemented via \verb+cvx+). In general,  the \verb+cvx+ based regularization algorithm is significantly slower than the PGD method. This is partly due to the infrastructure of generic \verb+cvx+ is not tailored to solve the specific convex optimization problems. On the other hand, the PGD is much easier to implement and enjoys fast rates of convergence, which may also contribute to its improved performance in terms of run-time. Besides, \verb+cvx+ cannot handle $p$ as large as those in Cases 1a, 2a and 3a. Hence, in order to do comparison in terms of the estimation error, we resort to moderate $p$ so that \verb+cvx+ runs to completion. The simulation setup is as follows:

\begin{enumerate}
\item[Case 6a:] (Gaussian) Low CP Rank with  $p=10$, $n =1000$, $r=5$, $\sigma = 0.5, 1, \text{ or } 2$ (SNR $\approx$ 4.8, 2.4 or 1.2);
\item[Case 7a:] (Gaussian) Low Tucker Rank with $p=10$, $n=1000$, $r=5$, $\sigma = 2.5, 5, \text{ or } 10$ (SNR $\approx$ 7.2, 3.6, or 1.8);
\item[Case 8a:] (Gaussian) Slices of Low-rank Matrices with $p=10$, $n=1000$, $r=5$, $s= 5$, $\sigma = 0.5, 1, \text{ or } 2$ (SNR $\approx$ 9.6, 4.8, or 2.4);
\item[Case 6b:] (Logistic) Low CP Rank with  $p=10$, $n =1000$, $\alpha = 3.5$, $r=5$, $m = 20, 5, \text{ or } 1$ (SNR $\approx$ 9.0, 4.5 or 2.0);
\item[Case 7b:] (Logistic) Low Tucker Rank with $p=10$, $n=1000$, $\alpha = 0.5$, $r=5$, $m = 20, 5, \text{ or } 1$ (SNR $\approx$  9.6 , 4.9 or 2.2);
\item[Case 8b:] (Logistic) Slices of Low-rank Matrices with $p=10$, $n=1000$, $\alpha = 1.2$, $r=5$, $s= 5$, $m = 20, 5, \text{ or } 1$ (SNR $\approx$ 7.7 ,  3.8 , or 1.7);
\item[Case 6c:] (Poisson) Low CP Rank with  $p=10$, $n =1000$, $\alpha = 0.5 $, $r=5$, $m = 20, 5, \text{ or } 1$ (SNR $\approx$  9.6 , 4.7, or 2.1);
\item[Case 7c:] (Poisson) Low Tucker Rank with $p=10$, $n=1000$, $\alpha = 0.06$, $r=5$, $m = 20, 5, \text{ or } 1$ (SNR $\approx$ 9.0, 4.5 or 2.0);
\item[Case 8c:] (Poisson) Slices of Low-rank Matrices with $p=10$, $n=1000$, $\alpha = 0.25$, $r=5$, $s= 5$, $m = 30, 10, \text{ or } 5$ (SNR $\approx$ 15.4, 8.8 or  6.2).
\end{enumerate}

Cases 6(a,b,c), 7(a,b,c) and 8(a,b,c) were constructed to represent different types of tensor low-rankness structure in least-square, logistic and Poisson regression. In each case, three levels of SNR(high, moderate and low) are considered. For each setting, we simulated 50 groups of $(T, \epsilon, X)$ and run PGD and convex-regularization methods for the recovery of $T$ to get average rmse with standard deviation for the two approaches respectively. Here we are comparing the best performance achieved by the PGD and convex regularization method respectively: for the PGD we use true parameter as the constraint parameter $r' = r$ (and $s' = s$); for convex regularization method, we do a grid search to choose the tuning parameter that yields the smallest rmse. 

The results are summarized in Table \ref{table1}. They show that in general, the PGD method produces smaller rmse's than convex regularization methods regardless of the noise level of the data.


\begin{table}
\centering
\begin{tabular}{|c|c|c|c|c|}
	\hline
	rmse (sd)  & SNR & PGD & convex-regularization \\
	\hline
	\hline
	Case 6a & High & 0.11 (0.01) & 0.28 (0.02)  \\
	\cline{2-4}
	& Moderate & 0.22 (0.01) &  0.47 (0.02)\\
	\cline{2-4}
	& Low & 0.46 (0.03) &  0.69 (0.02)\\
	\hline
	\hline
	Case 7a & High & 0.07 (0.01) &  0.18 (0.01) \\
	\cline{2-4}
	& Moderate & 0.14 (0.01) &  0.32 (0.02)  \\
	\cline{2-4}
	& Low & 0.28 (0.02) &  0.51 (0.02) \\
	\hline
	\hline
	Case 8a & High & 0.08 (0.01) &  0.12 (0.01)   \\
	\cline{2-4}
	& Moderate & 0.16 (0.01) &  0.23 (0.01)   \\
	\cline{2-4} 
	& Low & 0.30 (0.01) &  0.41 (0.02)   \\
	\hline
	\hline
	Case 6b & High & 0.16 (0.01) &  0.44 (0.02) \\
	\cline{2-4}
	& Moderate & 0.20 (0.01)  & 0.54 (0.02)  \\
	\cline{2-4}
	& Low & 0.35 (0.02) & 0.66 (0.02) \\
	\hline
	\hline
	Case 7b & High & 0.17 (0.01)  &  0.46 (0.02) \\
	\cline{2-4}
	& Moderate & 0.22 (0.01)&   0.55 (0.02)  \\
	\cline{2-4}
	& Low & 0.35 (0.01) & 0.67 (0.01)  \\
	\hline
	\hline
	Case 8b & High & 0.26 (0.01)  & 0.37 (0.02)    \\
	\cline{2-4}
	& Moderate & 0.34 (0.02)  & 0.50 (0.01)    \\
	\cline{2-4} 
	& Low & 0.56 (0.04) &  0.68 (0.02)   \\
	\hline
	\hline
	Case 6c & High & 0.09 (0.01) & 0.57 (0.03)   \\
	\cline{2-4}
	& Moderate & 0.17 (0.01)  &   0.61 (0.04) \\
	\cline{2-4}
	& Low &  0.39 (0.04) &  0.71 (0.03) \\
	\hline
	\hline
	Case 7c & High & 0.12 (0.01)  & 0.74 (0.02)  \\
	\cline{2-4}
	& Moderate &  0.21 (0.02)&  0.75 (0.02)   \\
	\cline{2-4}
	& Low & 0.43 (0.06) &  0.80 (0.02) \\
	\hline
	\hline
	Case 8c & High & 0.13 (0.01)  & 0.79 (0.03)   \\
	\cline{2-4}
	& Moderate &  0.22 (0.03)  &  0.81 (0.03)   \\
	\cline{2-4} 
	& Low & 0.32 (0.03) & 0.83 (0.02)  \\
	\hline
\end{tabular}
\caption{rmse of nonconvex PGD vs convex regularization }
\label{table1}
\end{table}

\section{Proofs}

\label{SecProofs}

\subsection{Proof of general results}
We first prove the results of Section \ref{SecMain}: Theorems \ref{TheoremGeneralLoss}, \ref{TheoremGLM} and \ref{Theorem0}. In particular, we first provide a proof for Theorem \ref{TheoremGeneralLoss}. For convenience we first state the proof for the Gaussian case (Theorem \ref{Theorem0}) and then describe the necessary changes needed for the more general GLM case (Theorem \ref{TheoremGLM}).

\subsubsection{Proof of Theorem \ref{TheoremGeneralLoss}}

The proof follows very similar steps to those developed in ~\cite{JainEtAl14, JainEtAl16}. Recall that $\widehat T_{k+1} = P_{\Theta(t_1 )}(g_k)$ where $g_k = \widehat T_k - \eta \nabla f(\widehat T_k)$. For $\widehat T_{k+1}\in \Theta(t_1 )$ and any $T \in  \Theta(t_0-t_1 )$, the superadditivity condition guarantees that there exists a linear subspace $\mathcal{A} = \{ \alpha_1 \widehat T_{k+1} +\alpha_2 T \ |  \alpha_1, \alpha_2 \in \RR \}$ such that
 $\widehat T_{k+1} \in \mathcal{A}$, $T \in \mathcal{A}$ and $\mathcal{A} \subset \Theta(t_0 )$. 

The contractive projection property CPP($\delta$) implies that for any $T \in  \Theta(t_0-t_1 )$,
$$
\|(\widehat T_{k+1} - g_k )_\mathcal{A} \|_{\rm F} \leq \delta\left \|  \frac{t_0-t_1}{t_1}  \right \|_{\ell_\infty}^{1/2}      \cdot \|(T - g_k )_\mathcal{A} \|_{\rm F}.
$$
Since $t_1=\left\lceil{4 \delta^2 C_u^2 C_l^{-2}\over 1+ 4 \delta^2 C_u^2 C_l^{-2}}\cdot t_0\right\rceil$,
$$
\delta\left \|  \frac{t_0-t_1}{t_1}  \right \|_{\ell_\infty}^{1/2}    \leq (2C_u C_l^{-1})^{-1}.
$$
Hence,
\begin{eqnarray*}
\|\widehat T_{k+1} - T \|_{\rm F} & \leq & \|\widehat T_{k+1} - g_k \|_{\rm F} + \|T - g_k \|_{\rm F}\\
 &\leq& \left(1+  \delta\left \|  \frac{t_0-t_1}{t_1}  \right \|_{\ell_\infty}^{1/2}      \right)\|(T - g_k )_{\mathcal{A}} \|_{\rm F}  \\
 &\leq &  (1+  (2C_u C_l^{-1})^{-1}  )\|(T - g_k )_{\mathcal{A}} \|_{\rm F}  \\
&\leq& \left(1+  \frac{C_l}{2 C_u} \right )\|[T - \widehat T_k -\eta(\nabla f(T) -\nabla f(\widehat T_k)) )_{\mathcal{A}} \|_{\rm F} + 2\eta \| [\nabla f(T)]_{\mathcal{A}} \|_{\rm F},
\end{eqnarray*}
where the final inequality follows from the triangle inequality.
If we define the Hessian matrix of the function $f$ of a vectorized tensor as 
$$
H(A) = \nabla ^2 f(A),
$$
the Mean Value Theorem implies that
$$\mbox{vec}(\nabla f(T) -\nabla f(\widehat T_k)) = H(\widehat T_k +\alpha (T-\widehat T_k))\cdot (T - \widehat T_k),$$
for some $0<\alpha<1$, and
\begin{eqnarray*}
\|\widehat T_{k+1} - T \|_{\rm F}  \leq (1+  (2C_u C_l^{-1})^{-1}  )  \| [(I - \eta H(\widehat T_k +\alpha (T-\widehat T_k)) ) \text{vec}(\widehat T_k-T) ] _{\text{vec}(\mathcal{A})}\|_{\ell_2}\\ +2\eta \| [\nabla f(T)]_{\mathcal{A}} \|_{\rm F}.
\end{eqnarray*}

We now appeal to the following lemma:

\begin{lemma}\label{detail}
Suppose $\calS$ is a linear subspace of $\RR^d$, and $H$ is an $d\times d$ positive semidefinite matrix. For any given $0< c <1$, if for any $x \in \calS$, 
\begin{eqnarray}
\label{eq:cond0}
c  x^\top x \leq  x^\top H x   \leq    (2-c) x^\top x,
\end{eqnarray}
then for any $z \in \calS$, we have
$$
\| [(I - H) z]_{\calS}  \|_{\ell_2} \leq (1- c ) \|z\|_{\ell_2},
$$
$(\cdot)_{\calS}$ stands for the projection onto the subspace $\calS$.
\end{lemma}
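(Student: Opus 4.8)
The plan is to reduce the statement to an elementary spectral fact about the compression of a symmetric matrix to a subspace. Let $P$ denote the orthogonal projection of $\RR^d$ onto $\calS$, so that $(\cdot)_\calS = P(\cdot)$ and $Pz = z$ for every $z \in \calS$. The key observation is that for $z \in \calS$ one has $P(I-H)z = Pz - PHz = z - PHPz$, because $z = Pz$ forces $Hz = HPz$. Thus the operator whose norm we must control is $I_\calS - \widetilde H$, where $\widetilde H := PHP\big|_\calS$ is the compression of $H$ to $\calS$, regarded as a linear map $\calS \to \calS$, and we are left to show $\|(I_\calS - \widetilde H)z\|_{\ell_2}\le(1-c)\|z\|_{\ell_2}$.

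First I would record that $\widetilde H$ is symmetric and positive semidefinite on $\calS$, being a compression of a symmetric positive semidefinite matrix, and then compute its Rayleigh quotients: for any nonzero $x \in \calS$,
$$
\frac{x^\top \widetilde H x}{x^\top x} = \frac{(Px)^\top H (Px)}{x^\top x} = \frac{x^\top H x}{x^\top x} \in [\,c,\, 2-c\,],
$$
where the last containment is exactly hypothesis \eqref{eq:cond0} and the first equality uses $Px = x$. Hence every eigenvalue of $\widetilde H$ lies in $[c, 2-c]$.

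Next, by the spectral theorem applied to the self-adjoint operator $\widetilde H$ on the finite-dimensional inner product space $\calS$, the eigenvalues of $I_\calS - \widetilde H$ lie in $[\,1-(2-c),\, 1-c\,] = [\,c-1,\, 1-c\,]$, so its operator norm is at most $\max\{\,|c-1|,\, |1-c|\,\} = 1-c$, using $0 < c < 1$. Combining this with the identity $[(I-H)z]_\calS = P(I-H)z = (I_\calS - \widetilde H)z$ from the first step yields $\|[(I-H)z]_{\calS}\|_{\ell_2} \le (1-c)\|z\|_{\ell_2}$ for all $z \in \calS$, as claimed.

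There is no genuine obstacle here; the only point requiring a little care is the reduction in the first step, namely that although the hypothesis constrains only the quadratic form of $H$ on $\calS$ (not $H$ as an operator on all of $\RR^d$), this suffices because $P(I-H)z$ depends on $H$ solely through $PHP$, and the Rayleigh quotient of $PHP$ restricted to $\calS$ coincides with that of $H$ on $\calS$ since $P$ fixes $\calS$ pointwise.
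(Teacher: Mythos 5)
Your proof is correct, and it is at heart the same argument as the paper's: both reduce the claim to a spectral bound on the compression of $H$ to $\calS$. The difference is purely in execution. The paper factors $H=D^\top D$, expands the rows of $D$ in an orthonormal basis $e_1,\dots,e_q$ of $\calS$ plus a $\calS^\perp$ remainder, and shows that the resulting Gram matrix $\Lambda$ (which is exactly your $\widetilde H = PHP\big|_{\calS}$ written in coordinates) has spectrum in $[c,2-c]$, then computes $[(I-H)z]_\calS$ explicitly in that basis. Your coordinate-free version via $P(I-H)z=z-PHPz$ and the spectral theorem on $\calS$ is cleaner: it avoids the square-root factorization entirely, makes transparent why only the quadratic form of $H$ on $\calS$ matters (the point you correctly flag as the one needing care), and sidesteps the small slip in the paper's final display, where $\|[(I-D^\top D)z]_\calS\|_{\ell_2}^2$ should equal $\beta^\top(I-\Lambda)^2\beta$ rather than $\beta^\top(I-\Lambda)\beta$ — an inconsequential typo, but one your route never encounters.
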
 

\begin{proof}[Proof of Lemma \ref{detail}]
Suppose the orthonomal basis of $\calS$ is $e_1\ldots, e_q$, and then
$$
\RR^d = \{c e _1| c\in \RR\} \oplus \ldots \oplus \{c e _q| c\in \RR\} \oplus \calS ^\perp
$$
For positive semidefinite $H$, it can be decomposed as follows
$$
H = D^\top D.
$$
Hence we can decompose the rows of $D$ to get
$$
D = \sum_{i=1}^q \lambda_i e_i ^\top + (y_1, \ldots, y_n)^\top
$$
where $y_1,\ldots, y_n \in \calS ^\perp$, and $\lambda_i \in \RR^n$ for $i = 1,\ldots, q$.
Therefore,
$$
D^\top D =  \sum_{i=1}^ q\sum_{j=1}^q  (\lambda_i^\top \lambda_j) e_i e_j^\top + \sum_{k=1}^n y_k y_k^\top  + (y_1, \ldots, y_n) \sum_{i=1}^q \lambda_i e_i^\top + \sum_{i=1}^q e_i \lambda_i^\top \cdot (y_1, \ldots, y_n)^\top.
$$
Now for any $(\alpha_1,\ldots, \alpha_q)^\top \in \RR^q$,  we have $x = \sum_{i=1}^ q \alpha_i e_i\in \calS$, and hence
$$
x^\top D^\top D x = (\sum_{i=1}^ q \alpha_i e_i )^\top D^\top D (\sum_{i=1}^ q \alpha_i e_i ) =  \sum_{i=1}^ q\sum_{j=1}^q  (\lambda_i^\top \lambda_j) \alpha_i \alpha_j.
$$
The equation \ref{eq:cond0} then implies that the matrix 
$$
\Lambda = \{\Lambda_{i,j}\}_{i,j =1}^q \text{ where }
\Lambda_{i,j} =  \lambda_i^\top \lambda_j
$$
has eigenvalues bounded by $c$ from below and $2- c$ from above.
Next, notice that for any $z\in \calS$, we have $z = \sum_{i=1}^q \beta_ i e_i$ for some $(\beta_1,\ldots, \beta_q)^\top \in \RR^q$, and hence due to the fact that $y_1,\ldots y_n \in \calS^\perp$
$$
(I - D^\top D) z =  \left( I -  \sum_{i=1}^ q\sum_{j=1}^q  (\lambda_i^\top \lambda_j) e_i e_j^\top + (y_1, \ldots, y_n) \sum_{i=1}^q \lambda_i e_i^\top \right) \sum_{i=1}^q \beta_ i e_i,
$$
and furthermore
\begin{eqnarray*}
[(I - D^\top D) z]_\calS &=& \left(I - \sum_{i=1}^ q\sum_{j=1}^q  (\lambda_i^\top \lambda_j) e_i e_j^\top\right)    \sum_{i=1}^q \beta_ i e_i\\
&=& (e_1,\ldots, e_q) (I_{q\times q}- \Lambda) (\beta_1,\ldots \beta_q)^\top,
\end{eqnarray*}
and
$$
\|[(I - D^\top D) z]_\calS\|_{\ell_2}^2= (\beta_1,\ldots \beta_q) (I_{q\times q}- \Lambda) (\beta_1,\ldots \beta_q)^\top,
$$
which completes the proof. 
\end{proof}

Condition $RSCS(\Theta(t_0 ), C_l, C_u)$ guarantees the condition of Lemma \ref{detail} is satisfied with $H = \eta H(\widehat T_k +\alpha (T-\widehat T_k)) $, $c = (C_u C_l^{-1})^{-1}$ and $\calS = \mathcal{A}$. Hence 
Lemma \ref{detail} implies that 
\begin{eqnarray*}
\|\widehat T_{k+1} - T\|_{\rm F} &\leq & (1+  (2C_u C_l^{-1})^{-1}  )  \left(1- {(C_u C_l^{-1})^{-1}}\right) \|\widehat T_k -T\|_{\rm F} +2\eta \| [\nabla f(T)]_{\mathcal{A}} \|_{\rm F}.
\end{eqnarray*}

Therefore for any $k$, 
$$
\|\widehat T_{k+1} - T \|_{\rm F} \leq (1- (2C_u C_l^{-1})^{-1}) \| \widehat T_k - T \|_{\rm F}  + 2 \eta Q,
$$
where
\begin{eqnarray*}
 Q &= & \sup_{\mathcal{A}_0\subset \Theta(t_0 )}\| (\nabla f(T))_{\mathcal{A}_0}\|_{\rm F} , 
\end{eqnarray*}
where $\mathcal{A}_0$ is any linear subspace of $\Theta(t_0)$.
We then appeal to the following result.

\begin{lemma}\label{SupLemma}
Suppose $\mathcal A$ is a linear subspace of tensor space $\Omega$. For any $L\in \Omega$,
$$
\| (L)_{\mathcal A} \|_{\rm F}  =  \sup_{A\in \mathcal{A}\cap \mathbb{B}_{\rm F}(1)}\langle A,L\rangle
$$
\end{lemma}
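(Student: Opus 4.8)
The plan is to reduce the identity to the orthogonal decomposition of $L$ relative to $\mathcal{A}$, followed by Cauchy--Schwarz. Write $R := L - (L)_{\mathcal{A}}$. By the first-order optimality (normal) equations for the least-squares projection $(L)_{\mathcal{A}} = \argmin_{M \in \mathcal{A}} \|L - M\|_{\rm F}$ onto the \emph{linear} subspace $\mathcal{A}$, the residual $R$ is orthogonal to $\mathcal{A}$, i.e. $\langle A, R \rangle = 0$ for every $A \in \mathcal{A}$. Consequently $\langle A, L \rangle = \langle A, (L)_{\mathcal{A}} \rangle$ for all $A \in \mathcal{A}$, and in particular $\|(L)_{\mathcal{A}}\|_{\rm F}^2 = \langle (L)_{\mathcal{A}}, L \rangle$.

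For the bound ``$\le$'', I would take an arbitrary $A \in \mathcal{A} \cap \mathbb{B}_{\rm F}(1)$ and apply Cauchy--Schwarz:
$$
\langle A, L \rangle = \langle A, (L)_{\mathcal{A}} \rangle \le \|A\|_{\rm F}\,\|(L)_{\mathcal{A}}\|_{\rm F} \le \|(L)_{\mathcal{A}}\|_{\rm F},
$$
and taking the supremum over such $A$ gives $\sup_{A \in \mathcal{A} \cap \mathbb{B}_{\rm F}(1)} \langle A, L \rangle \le \|(L)_{\mathcal{A}}\|_{\rm F}$.

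For the reverse inequality ``$\ge$'', if $(L)_{\mathcal{A}} = 0$ then both sides vanish (the supremum is attained at $A = 0$); otherwise I would exhibit the explicit maximizer $A^\star := (L)_{\mathcal{A}} / \|(L)_{\mathcal{A}}\|_{\rm F} \in \mathcal{A} \cap \mathbb{B}_{\rm F}(1)$, for which
$$
\langle A^\star, L \rangle = \langle A^\star, (L)_{\mathcal{A}} \rangle = \|(L)_{\mathcal{A}}\|_{\rm F}.
$$
Combining the two directions yields the claimed equality (and shows the supremum is a maximum). If $\mathbb{B}_{\rm F}(1)$ is read as an open ball, one replaces $A^\star$ by $(1-\varepsilon)A^\star$ and lets $\varepsilon \downarrow 0$.

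There is no real obstacle here; the one step worth stating carefully is the orthogonality $R \perp \mathcal{A}$, since it is precisely what makes $\langle A, L\rangle$ depend on $L$ only through its component $(L)_{\mathcal{A}}$ in $\mathcal{A}$, turning the variational right-hand side into a plain Euclidean-norm computation inside the subspace.
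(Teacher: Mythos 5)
Your proof is correct. The lower-bound direction ($\|(L)_{\mathcal{A}}\|_{\rm F} \le \sup$) is essentially the same as the paper's: both establish $\langle (L)_{\mathcal{A}}, L\rangle = \|(L)_{\mathcal{A}}\|_{\rm F}^2$ (the paper via a perturbation of $(L)_{\mathcal{A}}$ along its own ray, you via the normal equations) and then exhibit the normalized projection as a feasible point. Where you genuinely diverge is the upper bound. The paper never invokes the full orthogonality of the residual; instead, for each candidate $D \in \mathcal{A}\cap\mathbb{B}_{\rm F}(1)$ it projects $L$ onto the one-dimensional ray $\{\alpha D : \alpha \ge 0\}$ and compares the length of that ray-projection with $\|(L)_{\mathcal{A}}\|_{\rm F}$. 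You instead observe once and for all that $L - (L)_{\mathcal{A}}$ is orthogonal to all of $\mathcal{A}$, so $\langle A, L\rangle = \langle A, (L)_{\mathcal{A}}\rangle$ on $\mathcal{A}$, and finish with Cauchy--Schwarz. Your route is shorter and more robust: the paper's chain $\langle D, L\rangle \le \langle D/\|D\|_{\rm F}, L\rangle = \langle D_\alpha/\|D_\alpha\|_{\rm F}, L\rangle \le \|(L)_{\mathcal{A}}\|_{\rm F}$ silently assumes $\langle D, L\rangle \ge 0$ and $D_\alpha \ne 0$ (harmless cases, but unaddressed there), whereas Cauchy--Schwarz covers all cases uniformly, including $(L)_{\mathcal{A}} = 0$, which you treat explicitly.
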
 

\begin{proof}[Proof of Lemma \ref{SupLemma}]
First, we are going to show that
$$
\| (L)_{\mathcal A} \|_{\rm F}  \leq  \sup_{A\in {\mathcal A}\cap \mathbb{B}_{\rm F}(1)}\langle A,L\rangle
$$
Suppose we have $(L)_{\mathcal A} = P\in \mathcal A$. Since for any $\alpha >-1$, $P+ \alpha P\in \mathcal A$, and hence
$$
\| P+ \alpha P - L \|_{\rm F} = \|P-L\|_{\rm F} +\alpha^2\|P\|_{\rm F} +\alpha \langle P,P-L \rangle \leq \|P-L\|_{\rm F}
$$
we must have $\langle P,P-L \rangle =0$, i.e. $\langle P, L \rangle  =\langle P,P \rangle$. (otherwise $\alpha$ of small magnitude with the same sign of $\langle P,P-L \rangle$ will violate the inequality).
Therefore, 
$$
\sup_{A\in  \mathcal A\cap \mathbb{B}_{\rm F}(1)}\langle A,L\rangle \geq 
\left\langle \frac{P}{\|P\|_{\rm F}},L \right\rangle = \| P\|_{\rm F} = \|(L)_{\mathcal A}\|_{\rm F}
$$
What remains is to show 
$$
\| (L)_{\mathcal A} \|_{\rm F}  \geq  \sup_{A\in {\mathcal A}\cap \mathbb{B}_{\rm F}(1)}\langle A,L\rangle
$$
For any $D\in {\mathcal A}\cap \mathbb{B}_{\rm F}(1)$, let $D_\alpha$ be the projection of $L$ onto $\{\alpha D | \alpha\geq 0 \}$, and hence
$$
\langle D_\alpha, L \rangle = \langle D_\alpha, D_\alpha \rangle \leq \langle P, P \rangle.
$$
Therefore, we have
$$
\left\langle {D},L \right\rangle \leq \left\langle \frac{D}{\|D\|_{\rm F}},L \right\rangle = \left\langle \frac{D_\alpha}{\|D_\alpha\|_{\rm F}},L \right\rangle \leq \| P\|_{\rm F}
$$
which completes the proof.
\end{proof}

Lemma \ref{SupLemma} then implies
$$
Q= \sup_{\mathcal{A}_0\subset \Theta(t_0 )}\| (\nabla f(T))_{\mathcal{A}_0}\|_{\rm F} =  \sup_{\mathcal{A}_0\subset \Theta(t_0 )} \sup_{A \in \mathcal{A}_0 \cap \mathbb{B}_{\rm F}(1)}  \langle \nabla f(T), A \rangle\le \sup_{A\in \Theta(t_0 )\cap \mathbb{B}_{\rm F}(1)}\left\langle \nabla f(T), A\right\rangle.
$$
Therefore, after
$$K = \lceil 2 C_u C_l^{-1} \log\frac{ {\|T\|}_{\rm F}}{\epsilon}\rceil$$
iterations,
$$
\|\widehat T_k - T \|_{\rm F} \leq \epsilon + 4 \eta C_u C_l^{-1}  \sup_{A\in \Theta(t_0 )\cap \mathbb{B}_{\rm F}(1)}\left\langle \nabla f(T), A\right\rangle,
$$
which completes the proof.
\medskip

\subsubsection{Proof of Theorem \ref{Theorem0}}
Recall that the original least-squares objective is
$$
f(A) = \frac{1}{2n} \sum_{i=1}^n{(Y^{(i)} - \langle X^{(i)}, A \rangle)^2}.
$$
Hence, the gradient at true tensor coefficient $T$ is :
$$
\nabla f(T) = \frac{1}{n} \sum_{i=1}^n {X^{(i)} \otimes [\langle T,  X^{(i)}\rangle - Y^{(i)}]} = -\frac{1}{n} \sum_{i=1}^n {X^{(i)} \zeta^{(i)}}
$$
for the least-squares objective we consider. Further
$$
\nabla^2 f(T) = \frac{1}{n} \sum_{i=1}^{n}{X^{(i)} \otimes X^{(i)}}.
$$
Through vectorization, the Hessian matrix $H$ is
$$
H = \sum_{i=1}^n \text{vec}(X^{(i)}) \text{vec}(X^{(i)})^\top.
$$
Lemma \ref{RestrictedEigenvalueLemma} then implies that given $n \geq c_1 w_G^2[\Theta( t_0) \cap \mathbb{B}_{\rm F}(1)],$ with probability at least
$$1-c_2/2\exp(-c_3w_G^2[\Theta(t_0) \cap \mathbb{B}_{\rm F}(1)]),$$
we have, for any $A \in \Theta(t_0)$,
$$
\left(\tau^{-1} c_l\right)^2 \langle A, A\rangle \leq   \frac{1}{n}  \sum_{i=1}^n \langle X^{(i)},A \rangle ^2      \leq (\tau c_u)^2 \langle A, A \rangle,
$$
which guarantees the $RSCS(\Theta(t_0 ), C_l, C_u)$ condition with $C_u = \tau c_u$ and $C_l = \tau^{-1} c_l$. 
Thus Theorem \ref{TheoremGeneralLoss} implies that 
$$
\|\widehat T_k - T \|_{\rm F} \leq \epsilon + 4 \eta \tau^2\kappa  \sup_{A\in \Theta(t_0 )\cap \mathbb{B}_{\rm F}(1)}\left\langle \frac{1}{n}\sum_{i=1}^n \zeta^{(i)} X^{(i)}, A\right\rangle.
$$
The last step is to show that
$$
\sup_{A\in \Theta(t_0 )\cap \mathbb{B}_{\rm F}(1)}\left\langle \frac{1}{n}\sum_{i=1}^n \zeta^{(i)}X^{(i)}, A\right\rangle \leq 2 c_u \sigma n^{-1/2} w_G[\Theta(t_0) \cap \mathbb{B}_{\rm F}(1)],
$$
with probability at least 
$$1 -c_2/2\exp\left\{-c_3 {w_G^2[\Theta(t_0) \cap \mathbb{B}_{\rm F}(1)] }\right\}.$$ 
This can be shown by simply applying Lemma 11 in ~\cite{RaskuttiYuan15} and replacing $\{A| \mathcal{R}(A)\leq 1\}$ with $\Theta(t_0 )\cap \mathbb{B}_{\rm F}(1)$. Note that all the proof steps for Lemma 11 in ~\cite{RaskuttiYuan15} are identical for $\Theta(t_0 )\cap \mathbb{B}_{\rm F}(1)$ since the sets $\Theta(t)$'s are symmetric.

\subsubsection{Proof of Theorem \ref{TheoremGLM}}

The proof follows the same flow as Theorem \ref{Theorem0} but we requires an important concentration result from \cite{Mendelson15}. Recall that in the GLM setting, according to \eqref{Eq:GLMLogLike},
$$
f(A) = \frac{1}{n} \sum_{i=1}^n  {(a(\langle X^{(i)}, A \rangle) - Y^{(i)}\langle X^{(i)}, A \rangle)}.
$$
Hence the gradient at true coefficient $T$ is 
$$
\nabla f(T) = \frac{1}{n} \sum_{i=1}^n  {(\mu_i - Y^{(i)})}X^{(i)},
$$
where $\mu_i = a'(\langle X^{(i)}, T \rangle)$, and the Hessian matrix at vectorized tensor $T$ is
$$
\nabla^2 f(T) =\sum_{i=1}^n W_{ii} \text{vec}(X^{(i)}) \text{vec}(X^{(i)})^\top.
$$
where $W_{ii}= a''(\langle X^{(i)}, T \rangle)$. 

Since $\mbox{Var}(Y^{(i)}) = a''(\langle X^{(i)}, T \rangle) = W_{ii}$, the moment assumption
${1}/{\tau_0^2} \leq {\rm Var}(Y^{(i)}) \leq \tau_0^2$ guarantees that 
$$
\frac{1}{\tau_0^2} \leq W_{ii} \leq \tau_0^2.
$$
Plus, for any $\tau > \tau_0$, Lemma~\ref{RestrictedEigenvalueLemma} guarantees that when $n > c_1w_G^2[\Theta(t_0) \cap \mathbb{B}_{\rm F}(1)]$,
$$
\left((\tau/\tau_0)^{-1} c_l\right)^2 \langle A, A\rangle \leq   \frac{1}{n}  \sum_{i=1}^n \langle X^{(i)},A \rangle ^2      \leq ((\tau/ \tau_0) c_u)^2 \langle A, A \rangle.
$$
Therefore $RSCS(\Theta(t_0 ), C_l, C_u)$ condition holds with 
$C_l = \tau^{-1} c_l$ and $C_u = \tau c_u$. Thus Theorem \ref{TheoremGeneralLoss} implies that 
$$
\|\widehat T_k - T \|_{\rm F} \leq \epsilon + 4 \eta \tau^2\kappa  \sup_{A\in \Theta(t_0 )\cap \mathbb{B}_{\rm F}(1)}\left\langle \frac{1}{n} \sum_{i=1}^n  {(Y^{(i)}-\mu_i)}X^{(i)}, A\right\rangle.
$$
For the last step, by applying a concentration result on the following multiplier empirical process
$$
 \sup_{A\in \Theta(t_0 )\cap \mathbb{B}_{\rm F}(1)}\left\langle \frac{1}{n} \sum_{i=1}^n  {(Y^{(i)}-\mu_i)}X^{(i)}, A\right\rangle =  \sup_{A\in \Theta(t_0)\cap \mathbb{B}_{\rm F}(1)}  \frac{1}{n} \sum_{i=1}^n{(Y^{(i)}-\mu_i)}  \left\langle    X^{(i)}, A\right\rangle,
$$
we can bound the quantity by the Gaussian width with large probability, up to some constant. 

More specifically, denote 
$$
\omega^{(i)} = \Sigma^{-1/2}\text{vec}(X^{(i)}) 
$$
then $\{\omega^{(i)}\}_{i=1}^n$ are i.i.d Gaussian random vectors and hence
\begin{eqnarray*}
& &\sup_{A\in \Theta(t_0 )\cap \mathbb{B}_{\rm F}(1)}  \frac{1}{n} \sum_{i=1}^n{(Y^{(i)}-\mu_i)}  \left\langle    X^{(i)}, A\right\rangle \\
&=& \sup_{F\in \text{vec}(\Theta(t_0 )\cap \mathbb{B}_{\rm F}(1))}  \frac{1}{n} \sum_{i=1}^n{(Y^{(i)}-\mu_i)}     \text{vec}(X^{(i)})^\top F\\
&=& \sup_{F\in \Sigma^{1/2} \cdot \text{vec}(\Theta(t_0 )\cap \mathbb{B}_{\rm F}(1))}  \frac{1}{n} \sum_{i=1}^n{(Y^{(i)}-\mu_i)}     (\omega^{(i)})^\top F\\
&\leq& c_5 M_Y^{1/q} \sup_{F\in \Sigma^{1/2} \cdot \text{vec}(\Theta(t_0 )\cap \mathbb{B}_{\rm F}(1))}  \frac{1}{n} \sum_{i=1}^n    (\omega^{(i)})^\top F\\
&= & c_5 M_Y^{1/q} \sup_{A\in \Theta(t_0 )\cap \mathbb{B}_{\rm F}(1)}  \frac{1}{n} \sum_{i=1}^n  \left\langle    X^{(i)}, A\right\rangle\\
&\leq& \frac{c_5 M_Y^{1/q}c_u}{\sqrt{n}} w_G^2[\Theta( t_0) \cap \mathbb{B}_{\rm F}(1)],
\end{eqnarray*}
where the first inequality follows from Theorem 1.9 of \cite{Mendelson15}, and the second inequality holds in view of Lemma 11 of \cite{RaskuttiYuan15}, and both inequalities hold with probability greater than
$$1- c_2\exp\left\{-c_3 {w_G^2[\Theta(  \theta' +  \theta) \cap \mathbb{B}_{\rm F}(1)] }\right\} - c_4 n^{-(q/2-1)}\log^q n.$$

\subsection{Proofs of results in Section \ref{SecExamples}} 
We now present the proofs for the two main examples $\Theta_2(r ,s)$ and $\Theta_3(r)$. Our proofs involve: (i) proving that the projection properties hold for both sets of cones and (ii) finding an upper bound for the Gaussian width $w_G[\Theta(t) \cap \mathbb{B}_{\rm F}(1)]$. 

\subsubsection{Proof of Theorem \ref{Theorem1}}

First, it is straightforward to verify that $\{\Theta_2(r,s)\}$ is a superadditive family of symmetric cones. We then verify the contraction properties as stated by Lemma \ref{ProjectionLemma1}.

\begin{proof}[Proof of Lemma \ref{ProjectionLemma1}]
We need to develop an upper bound for $\|P_{ \Theta_2(r_2,s_2)}(Z) - Z \|_{\rm F}$ for a general tensor $Z \in \Theta_2(r_0,s_0)$. Let $\tilde{Z} \in \RR^{d_1\times d_2\times d_3}$ denote the tensor whose slices $\tilde{Z}_{\cdot \cdot j_3}$, ($j_3 = 1,\ldots, d_3$) are the rank-$r_2$ approximation of the corresponding slices of $Z$. First, it follows from the contraction property of low rank matrix projector \citep[see, e.g.,][]{JainEtAl16} that for all $1 \leq j_3 \leq d_3$, for any $Y_{\cdot \cdot j_3}$ such that $\mbox{rank}(Y_{\cdot \cdot j_3}) \leq r_1$
$$
\|\tilde{Z}_{\cdot \cdot j_3} - Z_{\cdot \cdot j_3}\|_{\rm F} \leq \beta \|Y_{\cdot \cdot j_3} - Z_{\cdot \cdot j_3}\|_{\rm F}.
$$
By summing over $j_3$ it follows that for any $Y \in \RR^{d_1\times d_2\times d_3}$ where $\mbox{rank}(Y_{\cdot \cdot j_3}) \leq r_1$ for all $j_3$
$$
\|\tilde{Z} - Z\|_{\rm F} \leq \beta \|Y - Z\|_{\rm F}.
$$

The projection $P_{ \Theta_2(r_2,s_2)}(Z)$ involves zeroing out the slices of $\tilde{Z}$ with the smallest magnitude. Let $v_{\tilde{Z}} := \mbox{vec}(\|\tilde{Z}_{\cdot \cdot 1} \|_{\rm F},\|\tilde{Z}_{\cdot \cdot 2} \|_{\rm F}, \ldots, \|\tilde{Z}_{\cdot \cdot d_3} \|_{\rm F} )$. As shown by \cite{JainEtAl14}, for all $Y$ where $v_Y = \mbox{vec}(\|Y_{\cdot \cdot 1} \|_{\rm F},\|Y_{\cdot \cdot 2} \|_{\rm F}, \ldots, \|Y_{\cdot \cdot d_3} \|_{\rm F} )$ and $\|v_Y\|_{\ell_0} \leq s_1$,
$$
\|\tilde{P}_{s}(v_{\tilde{Z}}) - v_{\tilde{Z}}\|_{\ell_2} \leq \alpha\| v_Y- v_{\tilde{Z}}\|_{\ell_2}.
$$
Therefore
\begin{eqnarray*}
\| P_{ \Theta_2(r_2,s_2)}(Z) - \tilde{Z} \|_{\rm F} &\leq& \alpha \| Y - \tilde{Z}   \|_{\rm F} \\
&\leq& \alpha (\|Y-Z\|_{\rm F} +\|\tilde{Z}-Z\|_{\rm F})\\
&\leq& (\alpha +\alpha\beta) \|Y-Z\|_{\rm F}.
\end{eqnarray*}
Hence using the triangle inequality:
$$
\| P_{ \Theta_2(r_2,s_2)}(Z) - Z \|_{\rm F} \leq  \|\tilde{Z} - Z\|_{\rm F} +\| P_{ \Theta_2(r_2,s_2)}(Z) - \tilde{Z} \|_{\rm F} \leq (\alpha +\beta +\alpha\beta)\cdot \|Y - Z\|_{\rm F},
$$
which completes the proof.
\end{proof}

Lemma \ref{ProjectionLemma1} guarantees that $P_{\Theta_2(r,s)}$ satisfies the contractive projection property CPP($\delta$) with $\delta =3$. Hence, by setting $t_1 = (r',s' )$ and $t_0 =  ( r' + r , s' + s)$, Theorem \ref{Theorem0} directly implies the linear convergence rate result with statistical error of order
$$n^{-1/2}{w_G[\Theta_2(  r' + r , s' + s) \cap \mathbb{B}_{\rm F}(1)] }.$$

It remains to calibrate the Gaussian width. Recall the definition of the convex regularizer:
$$
\mathcal{R}_1(A) = \sum_{j_3 = 1}^{d_3} \| A_{\cdot \cdot j_3} \|_{\ast}.
$$
It is straightforward to show that
$$
\Theta_2(r' + r,s'+ s)\cap \mathbb{B}_{\rm F}(1) \subset \mathbb{B}_{\mathcal{R}_1}(\sqrt{(r' + r)(s'+ s)}).
$$
Then Lemma 5 of \cite{RaskuttiYuan15} implies that 
\begin{eqnarray*}
w_G[\Theta_2(  r' + r , s' + s) \cap \mathbb{B}_{\rm F}(1)] &\leq&  w_G[\mathbb{B}_{\mathcal{R}_1}(\sqrt{(r' + r)(s'+ s)})]  \\ &\leq&  \sqrt{(s' + s)(r'+r)}w_G[\mathbb{B}_{\mathcal{R}_1}(1)] \\ &\leq&  \sqrt{(s' + s)(r'+r)} \sqrt{{6(d_1 +d_2 +\log d_3)}}
\end{eqnarray*}
which completes the proof.

\subsubsection{Proof of Theorem \ref{Theorem2}}

Once again, it is straightforward to verify that $\{\Theta_3(r)\}$ is a superadditive family of symmetric cones. We now verify the contraction properties

\begin{proof}[Proof of Lemma \ref{ProjectionLemma2}]
To develop an upper bound for $\|\widehat{P}_{ \Theta_3(r_2)}(Z) - Z \|_{\rm F}$ for a general tensor $Z \in \Theta_2(r_0)$, we introduce the following three tensors (recursively):
\begin{eqnarray*}
Z_{(1)} &:=&  (\mathcal{M}_1^{-1}\circ\bar{P}_{r_2}\circ\mathcal{M}_1)(Z)\\
Z_{(2)} &:=& (\mathcal{M}_2^{-1}\circ\bar{P}_{r_2}\circ\mathcal{M}_2)(Z_{(1)})\\
Z_{(3)} &:=& (\mathcal{M}_3^{-1}\circ\bar{P}_{r_2}\circ\mathcal{M}_3)(Z_{(2)}),        
\end{eqnarray*}
where we recall that $\mathcal{M}_1(\cdot)$, $\mathcal{M}_2(\cdot)$ and $\mathcal{M}_3(\cdot)$ are the mode-1, mode-2 and mode-3 matricization operators. Therefore $\widehat{P}_{ \Theta_3(r_2)}(Z) = Z_{(3)}$ and:
\begin{eqnarray*}
\|\widehat{P}_{ \Theta_3(r_2)}(Z) - Z \|_{\rm F} \leq \|\widehat{P}_{ \Theta_3(r_2)}(Z) - Z_{(2)}\|_{\rm F} + \|Z_{(2)} - Z_{(1)}\|_{\rm F} + \|Z_{(1)}-Z\|_{\rm F}.
\end{eqnarray*}
Next note that
\begin{eqnarray*}
\|Z_{(1)}-Z\|_{\rm F} = \| (\mathcal{M}_1^{-1}\circ\bar{P}_{r_2}\circ\mathcal{M}_1)(Z) - Z \|_{\rm F} = \|\bar{P}_{r_2}(\mathcal{M}_1(Z)) - \mathcal{M}_1(Z)\|_{\rm F}.
\end{eqnarray*}
As shown by \cite{JainEtAl16}, for any $Y$ such that $\mbox{rank}(Y) \leq r_1$,
\begin{eqnarray*}
\|Z_{(1)}-Z\|_{\rm F} \leq \beta \|Y - Z\|_{\rm F}.
\end{eqnarray*}
Using a similar argument and the triangle inequality
\begin{eqnarray*}
\|Z_{(2)}-Z_{(1)}\|_{\rm F} \leq \beta \|Y - Z_{(1)}\|_{\rm F} \leq \beta(\|Y - Z\|_{\rm F} +  \|Z_{(1)} - Z\|_{\rm F}) \leq (\beta + \beta^2)\|Y-Z\|_{\rm F}.
\end{eqnarray*}
Furthermore,
\begin{eqnarray*}
\|Z_{(3)}-Z_{(2)}\|_{\rm F} &\leq& \beta \|Y - Z_{(2)}\|_{\rm F}\\
&\leq& \beta(\|Y - Z\|_{\rm F} +  \|Z_{(2)} - Z\|_{\rm F})\\
&\leq& (\beta + 2\beta^2 + \beta^3)\|Y-Z\|_{\rm F}.
\end{eqnarray*}
Therefore for all $Y \in {\Theta_3(r_1)}$
\begin{eqnarray*}
\|\widehat{P}_{ \Theta_3(r_2)}(Z) - Z \|_{\rm F} &=& \|P_{ \Theta_3(r_2)}(Z) - Z_{(2)}\|_{\rm F} + \|Z_{(2)} - Z_{(1)}\|_{\rm F} + \|Z_{(1)}-Z\|_{\rm F}\\
&\leq& (3 \beta + 3 \beta^2 + \beta^3)\|Y - Z\|_{\rm F}.
\end{eqnarray*}
\end{proof}

Lemma \ref{ProjectionLemma1} guarantees the approximate projection $\widehat P_{\Theta_3(r)}$ fulfills the contractive projection property CPP($\delta$) with $\delta =7$. And hence via setting $t_1 = r'$ and $t_0 = r' + r$, Theorem \ref{Theorem0} directly implies the linear convergence rate result with statistical error of order $ n^{-1/2}{w_G[\Theta_3(  r' + r ) \cap \mathbb{B}_{\rm F}(1)] }$.
To upper bound the Gaussian width, we define the following nuclear norms:
$$
\mathcal{R}_{(i)}(A) = \|\mathcal{M}_i(A)\|_{*},
$$
where $1 \leq i \leq 3$ and $\|.\|_{*}$ is the standard nuclear norm. Then it clearly follows that
$$
\Theta_3(r'+ r)\cap \mathbb{B}_{\rm F}(1)\subset  \cap_{i=1}^{3} \mathbb{B}_{\mathcal{R}_{(i)}}(\sqrt{r' + r}).
$$
Lemma 5 in ~\cite{RaskuttiYuan15} then implies that 
\begin{eqnarray*}
w_G[\Theta_3(  r' + r ) \cap \mathbb{B}_{\rm F}(1)] &\leq&  w_G[\cap_i \mathbb{B}_{\mathcal {R}_{(i)}}(\sqrt{r' + r})]  \\ 
&\leq & \min_i w_G[ \mathbb{B}_{\mathcal {R}_{(i)}}(\sqrt{r' + r})] \\
&\leq&  \sqrt{r'+r}\min_i w_G[ \mathbb{B}_{\mathcal {R}_{(i)}}(1)] \\ 
&\leq&  \sqrt{r'+r} \sqrt{6\min\{ d_1+d_2d_3, d_2 + d_1d_3, d_3+ d_1d_2\}}
\end{eqnarray*}
which completes the proof.

\bibliographystyle{plainnat}

\bibliography{Biblio_TensorPGD}
\end{document}